\definecolor{darkcyan}{rgb}{0.0, 0.55, 0.55}
\newtheorem{claim}[theorem]{Claim}
\newcommand{\SP}{\mathit{SP}}
\renewcommand{\setminus}{\backslash}
\title{Distance-based Learning %(Reconstruction) 
of Hypertrees}
\author[1]{Shaun Fallat}
\author[2]{Kamyar Khodamoradi}
\author[3]{David Kirkpatrick}
\author[2]{Valerii Maliuk}
\author[4]{S.\ Ahmad Mojallal}
\author[2,5]{Sandra Zilles}
\affil[1]{Department of Mathematics and Statistics, University of Regina}
\affil[2]{Department of Computer Science, University of Regina}
\affil[3]{Department of Computer Science, University of British Columbia}
\affil[4]{Department of Computer Science, Simon Fraser University}
\affil[5]{Alberta Machine Intelligence Institute (Amii)}
\date{}
\begin{document}

\maketitle

\begin{abstract}%
  %In the context of 
  We study the problem of learning hypergraphs with shortest-path queries ($\SP$-queries), and present the first provably optimal online algorithm for a broad and natural class of hypertrees which we call \emph{orderly hypertrees}. Our online algorithm can be transformed into a provably optimal offline algorithm.
  Orderly hypertrees can be positioned within the Fagin hierarchy of acyclic hypergraph (well-studied in database theory), and strictly encompass the broadest class in this hierarchy that is learnable with subquadratic $\SP$-query complexity. \\
  Recognizing that in some contexts, such as evolutionary tree reconstruction, distance measurements can degrade with increased distance, we also consider a learning model that uses bounded distance queries. In this model, we demonstrate asymptotically tight complexity bounds for learning general hypertrees.
  
\end{abstract}

% \begin{keywords}%
%   Learning hypergraphs, shortest-path queries, online algorithms, hypergraph acyclicity
% \end{keywords}

\section{Introduction}

%{\red
%[DK] We should be aware that the way citations are presented in the current style, makes it natural to have citations be the subject of a sentence. This will not work as well in other styles...
%}

Due to their various applications, hypergraphs -- sometimes referred to as set systems or range spaces -- have been the focus of numerous learning-theoretic studies in the past two decades. Typical learning settings are rooted in similar studies for conventional graphs (hypergraphs with edges of size 2), and often involve models in which a learner interacts with an information source (oracle) by asking queries about the unknown target hypergraph.
The settings differ in (i) the type of query the learner can ask, (ii) whether the learner is deterministic or randomized, and (iii) whether it learns adaptively 
or non-adaptively \citep{AbrahamsenBRS16,Hein,Janardhanan17,9,10,Reyzin}. 

One well-developed line of research has focused on the reconstruction of evolutionary (phylogenetic) trees (rooted edge-weighted trees whose leaves are labeled by species). Reconstruction in this case involves determining the branching structure of the tree based on absolute or relative distances between species. 
\citet{Hein} studied evolutionary tree reconstruction using 
shortest-path queries ($\SP$-queries), which allow the reconstruction algorithm (learner) to specify two leaves whose distance in the target tree is revealed by an oracle. 
Hein showed that evolutionary trees can be 
learned in this way using $O(n\log n)$ $\SP$-queries,
where $n$ denotes the number of leaves in the target tree.
\cite{Brodal2001} studied evolutionary tree reconstruction using 
weaker relative-distance queries (introduced by \cite{Kannan1996}), and showed that 
$O(n\Delta\log_\Delta n)$ such queries suffice to reconstruct a target tree, where $\Delta$ denotes the maximum degree of vertices in the target tree.
\cite{10} showed that this bound is tight, up to a constant multiplicative factor, by demonstrating an $\Omega(n\Delta\log_\Delta n)$ lower bound on the number of $\SP$-queries required in the worst case. 
They also introduced and analysed variants of evolutionary tree reconstruction using both bounded and approximate distance queries.

Recently, \citet{BastideG25} presented a simple
%entirely new
algorithm, based on structured insertion of vertices, that reconstructs arbitrary unweighted trees.  
%using , proving that Hein's bound generalizes to the class of all trees (not just phylogenetic trees of bounded  degree); 
Specifically, they 
%obtained a bound of 
showed that $O(n\Delta\log_\Delta n)$ 
%for the number of 
(unconstrained) $\SP$-queries 
%needed 
suffice to 
reconstruct 
%learn 
any tree on $n$ vertices, where $\Delta$, the degree of the target tree,
is not known to the 
%learner 
reconstruction algorithm in advance. 
In addition, they provide a matching lower bound, even for the expected number of $\SP$-queries needed by any randomized reconstruction algorithm.

The main question pursued in our paper is whether interesting classes of \emph{hyper}trees---hyper\-graphs that have many of the properties of trees---can be learned with $o(n^2)$ $\SP$-queries (in the worst case over all hypertrees in the class). This question is not only natural to ask but also of relevance to studies in database theory, computational biology, and other application domains in which hypergraphs (and specifically hypertrees) are used for modeling relationships between entities. To the best of our knowledge, the only existing work on learning hypertrees with $\SP$-queries is a recent paper by \citet{FMMZ}; they provide two algorithms, each of which yields subquadratic query complexity only for a very restricted subclass of hypertrees. While Fallat et al.\ proved that learning hyperstars (hypertrees of diameter two) already requires $\Omega(n^2)$ $\SP$-queries in the worst case, their algorithms suggest that there is hope for efficient learning of a large subclass of hypertrees of diameter at least three.

Hypertrees can be defined in several different ways, which draw parallels with characterizations of trees. The equivalence of these definitions is not always immediate~\citep{Brandstadt99}. 
%itself is not trivial. 
In database theory,  an entire hierarchy of notions of hypergraph acyclicity has been proposed \citep{Fagin1983Degrees}, dubbed $\alpha$-, $\beta$-, and $\gamma$-acyclicity, as well as Berge-acyclicity (in increasing order of restriction). We demonstrate that the class of $\gamma$-acyclic hypertrees, even when restricted to those of diameter at least three, has an $\SP$-query complexity of $\Omega(n^2)$. By contrast, we define a new class of hypertrees, called orderly hypertrees, and show that this class (i)~fits into Fagin's hierarchy between $\gamma$-acyclic and Berge-acyclic hypergraphs, and (ii)~allows for efficient $\SP$-query learning, when restricting to hypergraphs of diameter at least three.

We present an algorithm showing that $O(n\Delta\log_\Delta m)$ $\SP$-queries
suffice to learn the class of orderly hypertrees 
of diameter at least three, 
where $\Delta$ is the maximum number of edges any single edge in the target hypertree can intersect, and $m$ is the number of edges in the target hypertree (both unknown to the learner). 
The algorithm follows the incremental approach of Hein's algorithm for learning phylogenetic trees, with several essential differences stemming from the facts that (i) the learning target is a hypertree, whose edges can connect more than two vertices, and (ii) $\SP$-queries can involve non-leaf vertices. 
%{\red
%[DK] remark somewhere that since edges can have more than two vertices there is potential ambiguity if edges are weighted
%}

The efficiency of our algorithm takes advantage of a tree separator argument, similar in style to arguments previously used in the context of learning evolutionary and conventional trees  \citep{Hein,Brodal2001,BastideG25}, but requiring novel insights in order to handle hypertrees.

Our algorithm exploits the fact that every hypergraph has a unique representation as a labelled bipartite graph, which we call its \emph{skeleton graph}. Notably, orderly hypertrees are exactly those hypergraphs whose skeletons are trees. 
The algorithm learns the skeleton tree of the target, and thus the target hypertree itself.

Our algorithm is most naturally formulated as an \emph{online}\/ algorithm that inserts vertices one by one into an initially empty hypergraph, incrementally constructing the unique sub-hypergraph consistent with the queries posed so far. This makes it amenable to applications in which entities (vertices) are revealed in a stream, and partial reconstruction might suffice. 

An argument inspired by the lower bound arguments used by \cite{10} and \citet{BastideG25} for evolutionary and conventional trees allows us to show that an off-line variant of our algorithm is, up to a constant factor, optimal in the worst case, even compared with algorithms that know $m$ and $\Delta$ in advance. 
As such, our algorithm provides a substantial improvement over the two methods presented by \citet{FMMZ}. 
It follows, as well, that the online formulation of our algorithm is also optimal in terms of worst-case query complexity among all possible online algorithms for the same task.

In the context of evolutionary tree reconstruction, \cite{10} observe that measures of the distance between species degrade as the distance increases. 
This motivates the consideration of bounded distance queries, that provide accurate distance information only when the distance between specified species is less than some fixed threshold. 
We consider a similar query restriction in the context of hypertree learning, and describe query algorithms and lower bounds that together provide an asymptotically tight bound on bounded query complexity, for all fixed distance thresholds.

\section{Preliminaries}

A hypergraph $H=(V,E)$  consists of a finite vertex set $V$ of size $n$ and an edge set $E$ of size $m$; elements of $E$ are subsets of $V$ of cardinality $\ge 2$. 
%{\red define ${\mathrm dual}(H)$}
  The \textit{line graph} of hypergraph $H$, denoted $L(H)$, is the graph with vertex set $E$ and edge set 
 $\{ (e, e') \;|\; e, e' \in E  \; \& \; e\cap e'\neq \emptyset \}$.

 The \emph{edge degree}\/ of an edge $e$ in $H$ is the number of edges $e'\ne e$ in $H$ for which $e\cap e'\ne\emptyset$.
 A \emph{private vertex}\/ of an edge $e$ in  $H$ is a vertex in $e$ that does not belong to any edge $e'\ne e$ in $H$. We denote by $P_H(e)$ the set of all \emph{private vertices} of edge $e$. 

A \textit{path} $P$ in $H$ from $v_1$ to $v_{t}$ is an alternating sequence $v_1e_1\cdots  e_{t-1}v_{t}$ in which  $v_1$,\dots, $v_{t}\in V$ are distinct vertices, $e_1,  \ldots,  e_{t-1}\in E$ are distinct edges, and for $1 \le i < t$, $v_{i}, v_{i+1} \in e_i$. The \emph{length} of $P$ is $t-1$. 
%If $v_1 = v_t$ then $P$ is a \textit{cycle}.
%}
Further, the \textit{distance} $d_H(v, w)$ 
between two vertices $v$ and $w$ of $H$ is the minimum length of a path from $v$ to $w$, which is $\infty$ if no such path exists. 
Hypergraph $H$ is {\em connected} if, for any two vertices $u$, $v$ in $H$, there is a path from $u$ to $v$. Given $H$, the eccentricity of a vertex $v\in V$ is given by $\max_{w\in V} d_H(v,w)$. 

 If $U \subseteq V$, we denote by $H[U]$ the 
sub-hypergraph of \( H \) induced on $U$:
%\[
$H[U] = \{ e \cap U \mid e \in E \} \setminus \{\emptyset\}$.
%\]
%{\red
%We say that a hypergraph $H$ \emph{contains a cycle} if there is a set $S \subseteq V$ such that, after removal of nested edges, $H[S]$ is an ordinary graph cycle; $H$ is said to be \emph{acyclic} otherwise. 
%}

 \begin{definition}\label{def:hypertree}
A hypergraph $H$ is called a \emph{hypertree} if there exists a tree $T$ such that every hyperedge of $H$ is the set of vertices of a connected subtree of $T$.
\end{definition}

 \begin{remark}\label{rem:hypertree}
 Note that, unlike trees, hypertrees can contain cycles (paths with identical endpoints). 
 However, cycles in hypertrees are very constrained, as captured in the following equivalent characterization
 \citep{Brandstadt99}:
   A connected hypergraph $H$ is a \textit{hypertree} if its line graph $L(H)$ is chordal\footnote{A graph $G$ is \textit{chordal} if every induced cycle in $G$ has exactly three vertices.} 
   and its edge set $E$ satisfies the \emph{Helly property}: for every subset $S \subseteq E$, if every two edges in $S$ have a nonempty intersection, then $S$ has a nonempty intersection.
\end{remark}

The focus of this paper is on learning classes of hypergraphs with distance queries. 
In this context, a learner for a class $\mathcal{H}$ of 
hypergraphs is an algorithm that works iteratively in rounds.  In each round, it asks a query about an unknown target hypergraph $H=(V,E)\in\mathcal{H}$ and receives the correct answer from an oracle.  
The learning process stops once the target $H$ is the only  hypergraph in $\mathcal{H}$ that is consistent with the answers to all %of its
queries. 
The learner is said to \emph{learn}\/ the class $\mathcal{H}$ if it successfully identifies every target $H\in\mathcal{H}$ in this fashion, see, e.g.,~\citep{Beerliova}.

Unless stated otherwise, we assume that the learner knows the vertex set $V$, but has to identify the edge set $E$ (as a set of subsets of $V$). It does not know any parameters of the target $H$ (such as the number $m$ of edges, the diameter $d$, etc.) unless pre-specified by $\mathcal{H}$. The efficiency of the learner is assessed in terms of the number of rounds of the learning process (i.e., the number of queries asked) in the worst case considered over all possible target hypergraphs in $\mathcal{H}$.
Learning in our setup is \emph{adaptive}\/ in that each query may depend on the queries and responses processed in previous rounds.
Our main result is an efficient algorithm for learning a natural class of hypertrees (to be defined below) from \emph{shortest path queries}\/ ($\SP$-queries, for short). An $\SP$-query consists of a pair $(v,w)\in V^2$ of vertices and is answered with $d_H(v,w)$, where $H$ is the target hypergraph.
%{\red should we define bounded distance queries in this section?}

\begin{definition}
    A class $\mathcal{H}$ is \emph{$\SP$-learnable}, if there exists a learner that learns $\mathcal{H}$ using $\SP$-queries. $\mathcal{H}$ is \emph{hard to learn with $\SP$-queries}, if every learner that learns $\mathcal{H}$ with $\SP$-queries uses $\Omega(n^2)$ queries in the worst case over all targets $H\in\mathcal{H}$, where $n$ is the number of vertices in $H$. 
\end{definition}

\begin{remark}
    It is straightforward to see that, in learning a class of hypergraphs with multiple connected components using $\SP$-queries, an algorithm must first identify the associated partition of the vertex set, and then discover the structure of the individual components.
    To keep separate these sources of query complexity,
    we restrict our attention to learning families of connected hypergraphs.
\end{remark}

\begin{remark} Note that $\mathcal{H}$ is $\SP$-learnable if and only if $\mathcal{H}$ contains no two distinct members $H,H'$ such that $d_H=d_{H'}$. It follows immediately from this that if $\mathcal{H}$ is $\SP$-learnable then $O(n^2)$ $\SP$-queries suffice in the worst case.\\
Figure~\ref{fig:gammahard}(left) displays two hypertrees that induce the same distance function; any class containing them both is hence not $\SP$-learnable. $\SP$-learnability is also impossible when $\mathcal{H}$ has two members $H,H'$, where $H$ has nested edges (i.e., at least two edges, one of which is strictly contained in the other) and $H'$ results from $H$ by removing at least one edge that is contained in another edge. We therefore assume throughout this paper that a hypergraph does not have nested edges.
\end{remark}

\begin{remark}\label{rem:SPhard}
    %In general a hypertree is not uniquely determined by the set of all pairwise distances between its vertices (cf. Figure~\ref{fig:disorderlytree}).
 %Furthermore, even 
 \cite{FMMZ} showed that the class of hyperstars (hypertrees of diameter two) is hard to learn with $\SP$-queries. 
 However, 
 %as we argue in the following lemma, 
 even $\SP$-learnable classes of hypertrees of diameter at least three can be hard to learn with $\SP$-queries. 
 %Figure~\ref{fig:gammahard}(right) shows such a hypertree of diameter three. 
 As we argue in the proof of the following lemma, an adversary can force a learner to make quadratically many $\SP$-queries in order to learn all hypertrees in 
 any class $\mathcal{H}$ containing all hypertrees isomorphic to the hypertree in Figure~\ref{fig:gammahard}(right).  
 %is hard to learn with $\SP$-queries. As we argue in the proof of the following lemma, an adversary can force a learner to query quadratically pairs among the $n-3$ rightmost depicted vertices.
 %; the argument here is identical to that in the proof of Lemma~\ref{lem:nonorderly} in Appendix~\ref{app:}.
\end{remark}

\begin{lemma}\label{lem:nonorderly}
    Any class $\mathcal{H}$ containing all hypertrees isomorphic to the structure $H_n$ in Figure~\ref{fig:gammahard}(right) is hard to learn with $\SP$-queries.
\end{lemma}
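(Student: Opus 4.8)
The plan is an adversary argument, extending to diameter at least three the $\Omega(n^2)$ lower bound for hyperstars recalled in Remark~\ref{rem:SPhard}. Fix $n$, let $V$ be the known vertex set with $|V|=n$, and isolate in $H_n$ a family of $\Theta(n)$ edges, each having exactly two private vertices, that play mutually symmetric roles (together with whatever additional edges of $H_n$ push its diameter up to three). I would define $\mathcal{F}_n\subseteq\mathcal{H}$ to be the set of all hypertrees on $V$ that are isomorphic to $H_n$ and coincide with one fixed embedding of $H_n$ outside a designated subset $U\subseteq V$ with $|U|=\Theta(n)$; a member of $\mathcal{F}_n$ is then specified by a pairing of $U$ into the private-vertex slots of those edges, and, because the slots are interchangeable, every such pairing yields a hypergraph that is again isomorphic to $H_n$ (hence lies in $\mathcal{H}$). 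Thus $\mathcal{F}_n$ is in bijection with the set of perfect matchings on $U$.

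Next I would check that the distance function $d_H$ reveals information about the pairing only through pairs inside $U$: for $H,H'\in\mathcal{F}_n$ one has $d_H(x,y)=d_{H'}(x,y)$ whenever $\{x,y\}\not\subseteq U$, while for $x,y\in U$, $d_H(x,y)=1$ if $x$ and $y$ are paired and $d_H(x,y)$ equals a fixed larger value (the same for all unpaired pairs, by the symmetry of the slots) otherwise --- every remaining distance being forced by the isomorphism type of $H_n$ alone. Consequently, identifying the target within $\mathcal{F}_n$ is exactly the task of recovering a hidden perfect matching on $U$ using queries each of which only reports whether two specified elements of $U$ are matched, and there are $\binom{|U|}{2}=\Theta(n^2)$ candidate pairs.

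The adversary then answers every $\SP$-query $(x,y)$ with $\{x,y\}\not\subseteq U$ by the forced value, and answers a query $x,y\in U$ by the ``unpaired'' value whenever some matching consistent with the previous answers still leaves $x,y$ unmatched --- equivalently, whenever the complement in $U$ of the set of queried pairs still contains a perfect matching avoiding $\{x,y\}$. The point is a counting fact (the one underlying the hyperstar bound): a graph on $N$ vertices whose complement has a \emph{unique} perfect matching must itself have $\binom{N}{2}-O(N)$ edges, so after any $o(n^2)$ queries at least two distinct perfect matchings on $U$ remain consistent with all answers, i.e., two distinct members of $\mathcal{F}_n\subseteq\mathcal{H}$ induce identical answers to every query posed. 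By the stopping criterion in the definition of learning, no learner can have identified the target at that point, so every learner uses $\Omega(n^2)$ $\SP$-queries in the worst case over $\mathcal{H}$.

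The main obstacle is setting up $H_n$ --- matching Figure~\ref{fig:gammahard}(right) --- so that all three requirements hold simultaneously: (i)~arbitrarily re-pairing $U$ among the chosen slots keeps the hypergraph isomorphic to $H_n$; (ii)~the diameter stays at least three no matter the pairing; and, most delicately, (iii)~no $\SP$-query, including one with an endpoint outside $U$, reveals which slot a vertex of $U$ occupies --- only whether two vertices of $U$ share a slot. Verifying (iii) amounts to a careful case analysis of shortest paths in $H_n$, and together with pinning down the unique-perfect-matching counting bound it is where the real work lies; the surrounding adversary framework is the standard template.
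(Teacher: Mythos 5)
Your adversary template (fix a family of mutually isomorphic hypertrees that agree on all ``uninformative'' distances, then argue that each query eliminates only one candidate pair) is the right shape, but the specific reduction you build it on cannot be instantiated in $H_n$, and you yourself defer exactly that instantiation as ``where the real work lies.'' The structure $H_n$ in Figure~\ref{fig:gammahard}(right) has only \emph{four} hyperedges, not $\Theta(n)$ edges each with two private vertices: $e_2$ meets each of $e_1,e_3,e_4$ in a single vertex, and $e_3\cap e_4$ consists of $n-5$ vertices, one of which lies in $e_2$. So there is no set $U$ of $\Theta(n)$ interchangeable private-vertex slots to pair up, and the family $\mathcal{F}_n$ you propose (in bijection with perfect matchings on $U$) does not exist inside the isomorphism class of $H_n$. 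What actually varies over the consistent hypertrees is a single hidden \emph{pair}: the unique private vertex of $e_3$ and the unique private vertex of $e_4$, concealed among the $n-4$ mutually indistinguishable vertices of $(e_3\cup e_4)\setminus e_2$. The paper checks that no query with an endpoint outside this set distinguishes its members (your requirement (iii), which is easy here precisely because there are only four edges), and that the hidden pair is the unique pair in the set at distance two while all other pairs are at distance one. The adversary then answers $1$ to every in-set query until only one candidate pair remains, forcing $\binom{n-4}{2}-O(1)=\Omega(n^2)$ queries with no matching-theoretic counting at all.

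A secondary but genuine error: the counting fact you invoke is false as stated. A graph on $N$ vertices whose complement has a unique perfect matching need not have $\binom{N}{2}-O(N)$ edges; by Hetyei's theorem a unique-perfect-matching graph can have as many as $N^2/4$ edges, so the queried graph is only guaranteed $\binom{N}{2}-N^2/4$ edges. That weaker bound is still $\Omega(N^2)$, so your matching argument would survive if the construction existed---but since the construction does not match $H_n$, the proof as proposed does not establish the lemma.
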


\begin{proof} Note that 
%each edge in $H_n$ has exactly one private vertex,  
$|e_2\cap e_j|=1$ for $j\in \{1, 3, 4\}$, and the intersection of $e_3$ and $e_4$ consists of $n-5$ vertices, one of which belongs to $e_2$.
%\footnote{The same structure was used in the Figure~\ref{fig:gammahard}; we replicate the figure here and add annotation, for ease of reading.}

%The remainder of our proof is an adaptation of the argument that \cite{FMMZ}[Lemma 3] used to obtain a lower bound on the $\SP$-query complexity of hyperpaths of length 2. 

The vertices in $(e_3 \cup e_4)\setminus e_2$ cannot be distinguished from one another by their distance from any of the other vertices. Furthermore, the private vertices in $e_3$ and $e_4$ cannot be distinguished from the other vertices in $(e_3 \cup e_4)\setminus e_2$ by their distance from  the other vertices in $(e_3 \cup e_4)\setminus e_2$.  It follows that to identify the private vertices in $e_3$ and $e_4$, the unique pair of vertices in $(e_3 \cup e_4)\setminus e_2$ at distance two from one another, an algorithm must make an $\SP$-query of this pair.
Since an adversary can force this pair to be the last pair of vertices from $(e_3 \cup e_4)\setminus e_2$ to be queried, it follows that at least $(n-4)^2$ $\SP$-queries are required, in the worst case, to learn hypertrees in $\mathcal{H}$.
\end{proof}

\begin{figure}[t]
  \centering
  \includegraphics[width=0.4\textwidth, scale=0.2]{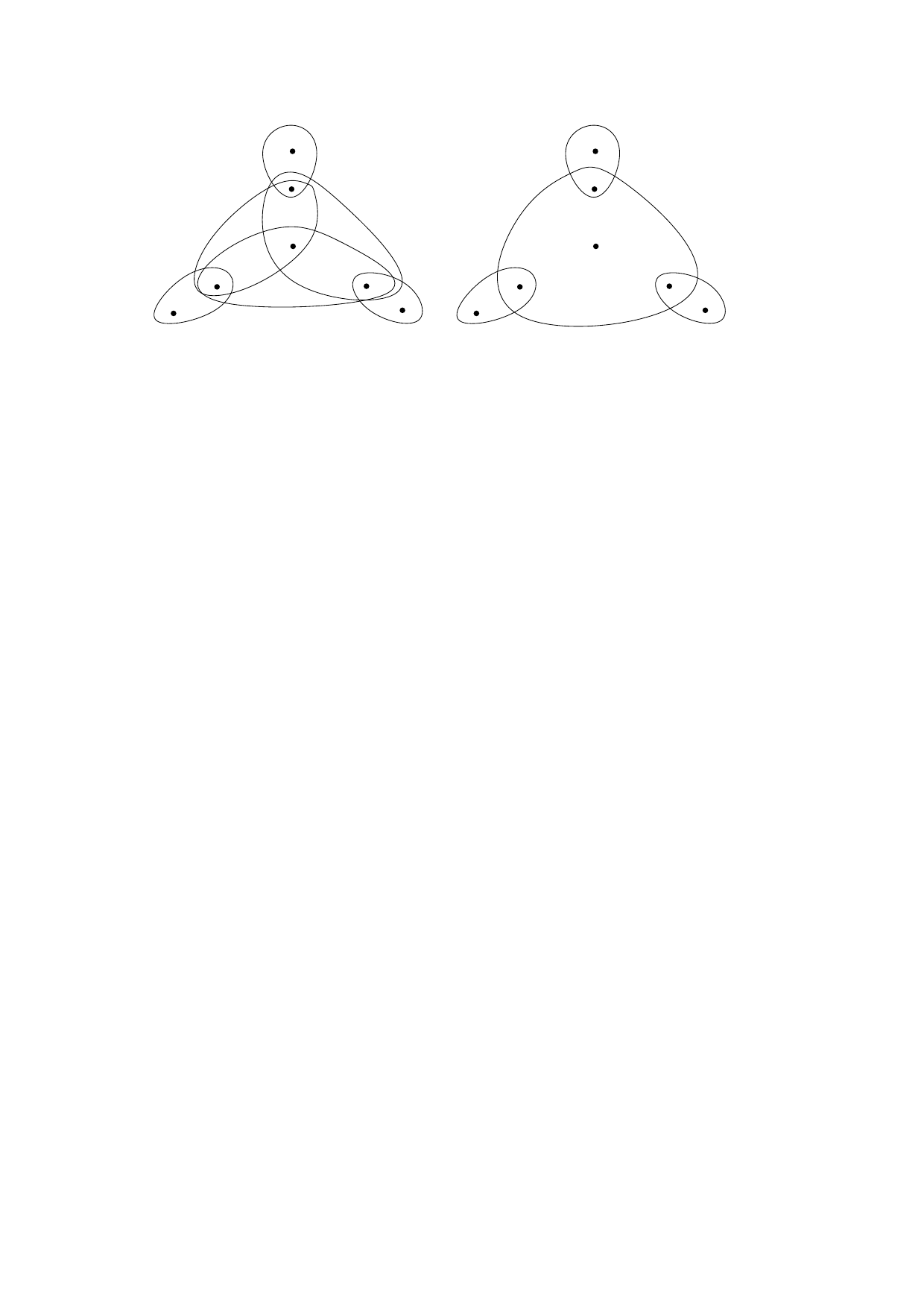}
  \hspace*{2cm}\includegraphics[width=0.3\textwidth, scale=0.02]{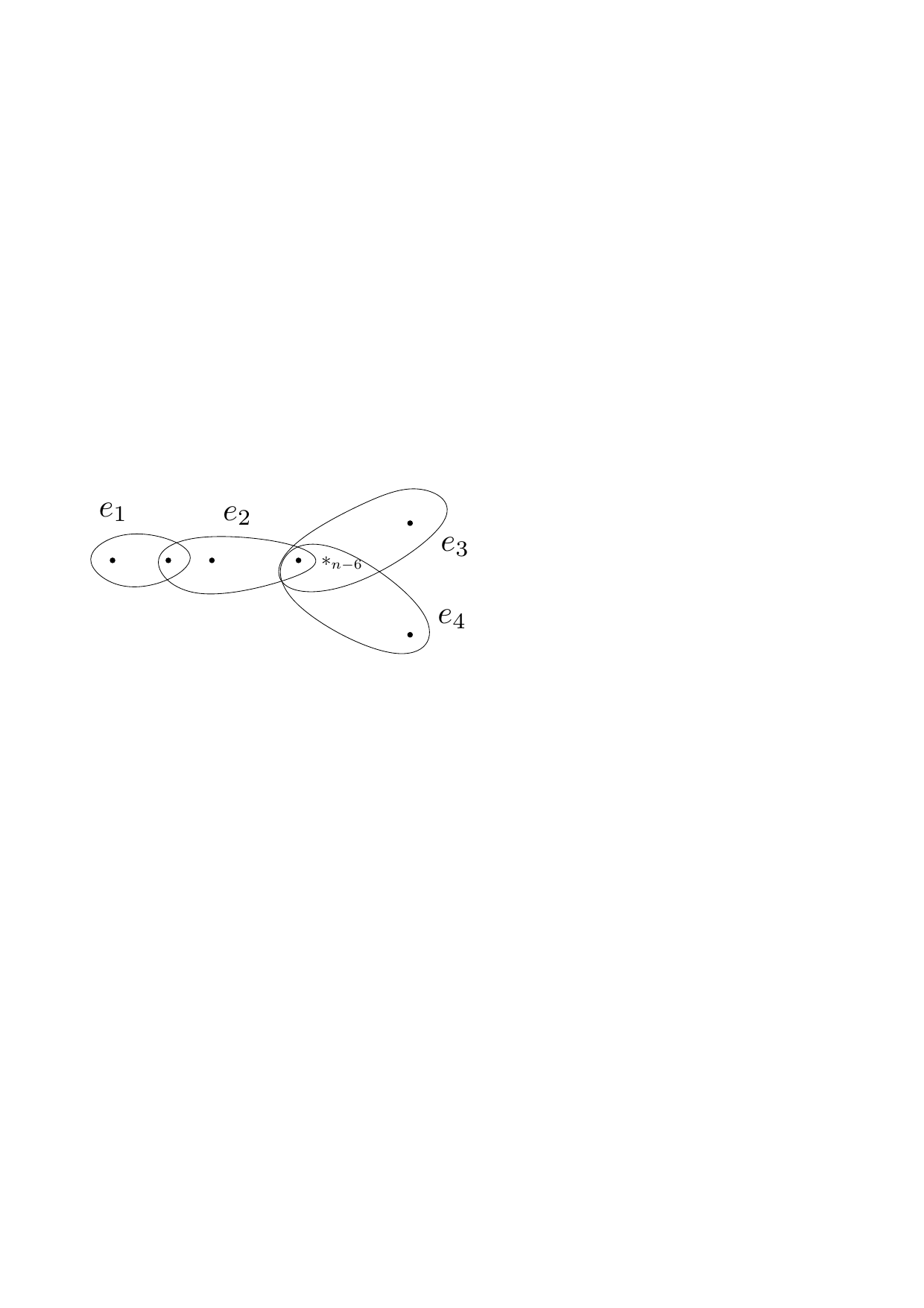}
 \caption{(Left) Two hypertrees with identical distances between corresponding vertices; vertices displayed in the same position in the left and right trees are assumed to be identical. (Right) Any class $\mathcal{H}$ containing all hypertrees isomorphic to this hypertree, with diameter 3, is hard to learn with $\SP$-queries. (Here $*_{n-6}$ denotes a cluster of $n-6$ vertices).}
  \label{fig:gammahard}
\end{figure}

\section{Orderly Hypergraphs}

The leftmost and rightmost hypertrees in Figure~\ref{fig:gammahard} have in common that three of their edges have a non-empty intersection while two of the three have a strictly larger intersection. 
It turns out that this can make $\SP$-learnability difficult. We hence propose the study of what we call \emph{orderly}\/ hypertrees.\footnote{Orderly hypertrees were previously also considered by \cite{FMMZ}, without explicitly naming them.}

\begin{definition}
    A hypergraph $H=(V, E)$ is intersection-orderly, or \emph{orderly}\/ for short, if, for any two distinct edges $e_1,e_2\in E$ with $S:=e_1\cap e_2\ne\emptyset$, and any edge $e\in E$, we have either $S\subset e$ or $S\cap e=\emptyset$.
\end{definition}

Our focus in this paper is on orderly hypertrees.
We will show that orderly hypertrees of diameter at least three can be learned efficiently using $\SP$-queries. This result is interesting from a learning-theoretic perspective, since both (i) the class of orderly hyperforests (collections of orderly hypertrees) and (ii) the class of orderly hypertrees of diameter two are hard\footnote{For (i), note that distinguishing between (a) a set of isolated vertices and (b) a set of isolated vertices plus an edge of size two requires $\Omega(n^2)$ $\SP$-queries. }
to learn with $\SP$-queries.

%Claim (ii) was proven by \cite{FMMZ}.} 
Equally important though, this result is significant from an application point of view since, unlike general hypertrees, the notion of orderly hypertree fits nicely into the Fagin hierarchy (cf. \citep{Fagin1983Degrees}) of acyclic hypergraphs, which is a familiar and well-studied in relational database theory. 
% (between $\gamma$-acyclic and Berge-acyclic). 
%According to Fagin, a hypergraph $H$ is \emph{conformal}\/ if for each set $U$ of vertices, if each pair of vertices in $U$ is contained in some edge of $H$, then $U$ is contained in some edge of $H$. Conformal hypergraphs then form the basis for Fagin's acyclicity hierarchy:
According to Fagin, a hypergraph $H=(V, E)$ is
\begin{itemize}
    \item \emph{acyclic} if there is no set $\{u_1, u_2, \ldots u_t \} \subseteq V$ such that
    %, after removal of nested edges, 
    $H[\{u_1, u_2, \ldots u_t \}]$ is an ordinary graph cycle: 
    $\{ \{u_1, u_2 \}, \{u_2, u_3 \}, \ldots, \{u_{t-1}, u_t \}, \{u_t, u_1 \} \}$.
    %\item \( H \) is \emph{$\alpha$-acyclic} if and only if \( H \) is both conformal and cycle-free.\textcolor{violet}{[SZ] I think there is a problem with this definition. It seems that $\alpha$-acyclic hypergraphs can indeed have cycles.}
   % \item \( H \) is \emph{$\alpha$-acyclic} if and only if \( H \) is  conformal and its primal graph (in which an edge connects two vertices if they belong to the same edge in $H$) is chordal.
   \item \emph{$\alpha$-acyclic} if and only 
   %if the empty hypergraph can be obtained from \( H \) by repeatedly removing isolated vertices, private vertices, and edges that are subsets of other edges.
   \( H \) is both acyclic and conformal
   \footnote{A hypergraph $H$ is \emph{conformal}\/ if for each set $U$ of vertices, if each pair of vertices in $U$ is contained in some edge of $H$, then $U$ itself is contained in some edge of $H$.}.
   
   %{\red
   %[DK] I have returned to the other characterization because I think it speaks more directly to notions of acyclicity. I have modified the proof of the following Claim accordingly.
   %}
    \item \emph{$\beta$-acyclic} if and only if all subsets of $E$ are $\alpha$-acyclic.
    %{\red (equivalently, every subset of edges is acyclic)}.
    \item \emph{$\gamma$-acyclic} if and only if \( H \) is beta-acyclic and does not contain pairwise distinct vertices $x,y,z$ such that 
    %\begin{equation}\label{Prop1}
    $ \{\{x, y\}, \{y, z\}, \{x, y, z\}\} \subseteq H[\{x, y, z\}]$.
    %\end{equation} 
    \item \emph{Berge-acyclic} if the associated (bipartite) incidence graph $G = \left\{ \{x, e\} \;\middle|\; x \in e,\, e \in H \right\}$ is acyclic.
    Equivalently, \( H \) is $\beta$-acyclic and does not contain pairwise distinct vertices $x,y,z$ such that 
    %\begin{equation}\label{Prop3}
     $\{\{x, y\}, \{x, y, z\}\} \subseteq H[\{x, y, z\}]$.  
    %\end{equation} 
\end{itemize}

It turns out that the class of orderly hypertrees lies between that of $\gamma$-acyclic and that of Berge-acyclic hypergraphs: %(see Appendix~\ref{app:} for a proof): %\textcolor{violet}{[SZ] we must add a proof for this claim.}

\begin{claim}
     A hypergraph $H=(V, E)$ is an orderly hypertree if and only if $H$ is $\beta$-acyclic and does not contain pairwise distinct vertices $x,y,z$ such that 
%\begin{equation}\label{Prop4}
    $\{\{x, y\}, \{y\}, \{x, y, z\}\} \subseteq H[\{x, y, z\}]$.  
  %  \end{equation} 
\end{claim}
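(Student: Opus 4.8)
The plan is to reformulate the forbidden pattern in terms of hyperedges and then prove the two directions, relying on the hypertree characterization of Remark~\ref{rem:hypertree} (Helly property plus chordal line graph) and on standard facts about the Fagin hierarchy. Unwinding the definition of $H[\{x,y,z\}]$, the condition $\{\{x,y\},\{y\},\{x,y,z\}\}\subseteq H[\{x,y,z\}]$ holds for pairwise distinct $x,y,z$ exactly when $E$ contains three necessarily distinct edges $a,b,c$ with $x,y\in a$ and $z\notin a$; with $y\in b$ and $x,z\notin b$; and with $x,y,z\in c$. I will call such a triple an \emph{obstruction}.

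For the ``only if'' direction, the absence of an obstruction is immediate: given one, put $S:=a\cap c$, which contains $\{x,y\}$ and so is nonempty; then $y\in S\cap b$ but $x\in S\setminus b$, so $S$ is neither contained in $b$ nor disjoint from $b$, contradicting orderliness applied to the distinct edges $a,c$ and the edge $b$. That an orderly hypertree $H$ is $\beta$-acyclic is the substantial part. I would invoke the ``special cycle'' characterization of $\beta$-acyclicity and suppose, for contradiction, that $H$ has one: distinct edges $e_1,\dots,e_k$ ($k\ge 3$, cyclic indices) and distinct vertices $v_1,\dots,v_k$ with $v_i\in e_i\cap e_{i+1}$ and $v_i$ in no $e_j$ with $j\notin\{i,i+1\}$. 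Orderliness forces each $S_i:=e_i\cap e_{i+1}$ to be disjoint from every $e_j$ with $j\notin\{i,i+1\}$: otherwise $S_i\cap e_j\ne\emptyset$, so orderliness yields $S_i\subseteq e_j$ and hence $v_i\in e_j$, a contradiction. Now fix a tree $T$ realizing $H$ (each $e_i$ is the vertex set of a subtree $T_{e_i}$), root $T$ arbitrarily, let $\rho_i$ be the vertex of $T_{e_i}$ nearest the root, and choose $i$, say $i=1$, maximizing the depth of $\rho_i$. Then $T_{e_1}$ lies in the subtree rooted at $\rho_1$, and since $T_{e_2}$ and $T_{e_k}$ each meet $T_{e_1}$ while having their nearest-to-root vertex no deeper than $\rho_1$, connectivity puts $\rho_1$ on a path inside each of them, so $\rho_1\in e_1\cap e_2\cap e_k$; this contradicts the disjointness of $S_1=e_1\cap e_2$ from $e_k$ (valid since $k\notin\{1,2\}$).

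For the ``if'' direction, assume $H$ is $\beta$-acyclic and obstruction-free. That $H$ is a hypertree follows from $\beta$-acyclicity alone: the special-cycle condition is symmetric under hypergraph duality, so $\beta$-acyclicity is self-dual; thus the dual $H^{*}$ is $\beta$-acyclic, in particular $\alpha$-acyclic, and since $\alpha$-acyclicity of $H^{*}$ is equivalent to $H$ having the Helly property and chordal line graph, Remark~\ref{rem:hypertree} gives that $H$ is a hypertree. That $H$ is orderly I argue by contradiction: if some $S=e_1\cap e_2\ne\emptyset$ with $e_1\ne e_2$ is split by an edge $e_3$, fix $y\in S\cap e_3$ and $x\in S\setminus e_3$; then $e_1,e_2,e_3$ are pairwise distinct since $H$ has no nested edges. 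If there is a vertex $w\in e_1\setminus(e_2\cup e_3)$, or symmetrically $w\in e_2\setminus(e_1\cup e_3)$, then $e_1,e_2,e_3$ realize an obstruction on $\{x,y,w\}$ (with traces $\{x,y,w\}$, $\{x,y\}$, $\{y\}$), a contradiction. Otherwise $e_1\setminus e_2\subseteq e_3$ and $e_2\setminus e_1\subseteq e_3$; choosing $w_1\in e_1\setminus e_2$ and $w_2\in e_2\setminus e_1$ (both then in $e_3$) gives four distinct vertices $x,y,w_1,w_2$ that are pairwise covered by an edge among $e_1,e_2,e_3$ but lie together in none of $e_1,e_2,e_3$, contradicting conformality of the subhypergraph $\{e_1,e_2,e_3\}$, which is $\alpha$-acyclic because $H$ is $\beta$-acyclic.

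This also locates orderly hypertrees within the Fagin hierarchy: every obstruction contains the Berge-pattern $\{\{x,y\},\{x,y,z\}\}$, so every Berge-acyclic hypergraph is an orderly hypertree, while a $\gamma$-pattern $\{\{x,y\},\{y,z\},\{x,y,z\}\}$ realized by edges $e_1,e_2,e_3$ makes $e_1\cap e_3\supseteq\{x,y\}$ meet $e_2$ without being contained in it, so orderly hypertrees are $\gamma$-acyclic. The step I expect to be the main obstacle is ``orderly hypertree $\Rightarrow\beta$-acyclic'': the rooted-tree/apex argument is the crux, and one must be careful with ties in depth and with matching the precise special-cycle definition being cited.
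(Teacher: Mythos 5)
Your proof is correct, but it takes a genuinely different route from the paper's in the two substantial steps. For ``orderly hypertree $\Rightarrow$ $\beta$-acyclic'', the paper works directly from the definition: a non-$\alpha$-acyclic edge subset is either non-conformal (contradicting orderliness/Helly, both of which are inherited by sub-hypergraphs) or contains an induced graph cycle, and then chordality of the line graph of the sub-hypertree produces a chord whose edge violates the Helly property via orderliness. You instead invoke Fagin's special-cycle characterization of $\beta$-acyclicity and kill a special cycle with a rooted-tree ``deepest apex'' argument on the realizing tree $T$; this is clean and self-contained modulo that characterization, and your handling of the depth tie ($\rho_2$ inside the subtree rooted at $\rho_1$ forces $\rho_2=\rho_1$) is the right care point. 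For the converse, the paper verifies chordality of $L(H)$ and the Helly property by hand (the latter via a minimal Helly violator yielding a triangle trace), whereas you get ``hypertree'' in one stroke from the self-duality of $\beta$-acyclicity plus the duality between $\alpha$-acyclic hypergraphs and hypertrees; your orderliness argument matches the paper's in its first case (the forbidden pattern on $\{x,y,w\}$) but replaces the paper's triangle on $\{x,z,z'\}$ by a four-vertex conformality violation in the case $e_1\setminus e_2,\,e_2\setminus e_1\subseteq e_3$ --- both work. The trade-off: the paper's proof is longer but uses only the definitions it states, while yours is shorter and more structural but leans on two nontrivial external theorems (the special-cycle characterization and self-duality of $\beta$-acyclicity) that the paper never states and would need to be cited to Fagin explicitly.
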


\begin{proof}
``$\Rightarrow$'' Let $H=(V,E)$ be an orderly hypertree. 
Suppose, towards a contradiction, that $H$ is not $\beta$-acyclic. 
%Since $E$ satisfies the strong Helly property, $E$ must be conformal. 
Thus, there must exist a subset $E' \subseteq E$ such that the hypergraph $H' = (V, E'))$ is not $\alpha$-acyclic. 
If $H'$ is not conformal then either $H'$ is not orderly or $E'$ does not satisfy the Helly property, contradicting our assumption that $H$ is an orderly hypertree. 
Hence, $H'$ must not be 
%both conformal and 
acyclic; i.e. there exist vertices $v_0, v_1, \ldots, v_{k-1}$ in $V$ and edges $e_0, e_1, \ldots, e_{k-1}$ in $E'$ such that, for all $i = 0, \ldots k-1$, $\{ v_i, v_{i+1} \} \in e_i$, $v_i \in e_{i-1} \cap e_i$, and $v_i \notin e_j$, for $j \notin \{ i-1, i \}$, (with indices reduced mod $k$).
Since $H'$ is a hypertree, $L(H')$ must be chordal, and so there exists $e_i \notin \{e_0, e_1 \}$ such that $e_i$ intersects both $e_0$ and $e_1$. 
Now since $H$ is orderly and $v_1 \in e_0 \cup e_1$ but $v_1 \notin e_i$, it follows that $e_0 \cap e_1 \cap e_i = \emptyset$ contradicting the assumption that $E'$ satisfies the Helly property.

Next, suppose, towards another contradiction, that 
%It remains to show that 
$H$ contains three distinct vertices $x,y,z$ such that $\{\{x, y\}, \{y\},$ $\{x, y, z\}\}\subseteq H[\{x, y, z\}]$. If so, then $H$ would have pairwise distinct edges $e_1,e_2,e_3$ with $e_1\cap\{x,y,z\}=\{x,y\}$, $e_2\cap\{x,y,z\}=\{y\}$, and $e_3\cap\{x,y,z\}=\{x,y,z\}$. Let $S=e_1\cap e_3$. Since $\{x,y\}\subseteq S$, we have $S\ne \emptyset$. Now $e_2\cap S\ne\emptyset$, but $e_2$ does not contain $S$, contradicting the premise that $H$ is orderly. %Consequently, $H$ does not contain three distinct vertices $x,y,z$ such that $\{\{x, y\}, \{y\}, \{x, y, z\}\} \subseteq H[\{x, y, z\}]$.

``$\Leftarrow$'' Suppose that $H=(V, E)$ is a $\beta$-acyclic hypergraph that does not contain three distinct vertices $x,y,z$ with $\{\{x, y\}, \{y\}, \{x, y, z\}\} \subseteq H[\{x, y, z\}]$. it suffices to show that $L(H)$ is chordal, $E$ satisfies the Helly property,
and $H$ is orderly.

To show that $L(H)$ is chordal, suppose by way of contradiction that $L(H)$ has a chordless cycle of length at least four. The set of vertices of such cycle are a set of edges in $H$ that form a sub-hypergraph of $H$ that is not $\alpha$-acyclic. Thus $H$ is not $\beta$-acyclic---a contradiction. Therefore, $L(H)$ is chordal.

To verify that $E$ 
is orderly, 
%satisfies the Helly property, 
%(following Remark~\ref{def:orderly})
let $e_1$ and $e_2$ be two edges in $H$ with non-empty intersection.
%By way of contradiction, s
Suppose, towards a contradiction, that there is an edge $e\in E\setminus \{e_1,e_2\}$ such that $e$ contains some element $y\in e_1\cap e_2$, but $e\not\supseteq e_1\cap e_2$. Thus, let $x\in e_1\cap e_2\setminus e$. Since we limit ourselves to hypergraphs without nested edges, there are vertices $z,z'$ such that $z\in e_1\setminus e_2$ and $z'\in e_2\setminus e_1$. By the premise, $\{\{x, y\}, \{y\}, \{x, y, z\}\} \not\subseteq H[\{x, y, z\}]$ and $\{\{x, y\}, \{y\}, \{x, y, z'\}\} \not\subseteq H[\{x, y, z'\}]$. Therefore, $e$ contains both $z$ and $z'$. Thus 
%$\{e_1,e_2,e\}[\{x, y, z, z'\}]=\{\{x, y, z\},\{x, y, z'\},\{y,z,z'\}\}$.
$\{e_1,e_2,e\}[\{x, z, z'\}]=\{\{x,  z\},\{x, z'\},\{z,z'\}\}$.
In particular, $\{e_1,e_2,e\}$ is not acyclic, and thus $H$ is not $\beta$-acyclic---a contradiction. 
Hence, $H$ is orderly.
%, i.e. $E$ 
%satisfies the strong Helly property.

Suppose, towards another contradiction, that $E$ does not satisfy the
Helly property. Let $S:=\{e_1,\ldots,e_k\}$ is any minimal subset of $E$ contradicting the Helly property, i.e., any two edges in $S$ intersect, $e_1\cap\cdots\cap e_{k-1}\ne\emptyset$, but $e_1\cap\cdots\cap e_k=\emptyset$. Since $H$ is orderly, $e_1\cap\cdots\cap e_{k-1}=e_i\cap e_j$ for any two distinct $i,i\in\{1,\ldots,k-1\}$. 
Let $x \in e_1 \cap e_2$, $y \in e_1 \cap e_k$, and $z \in e_2 \cap e_k$. 
Then $H'[ \{x, y, z \} ] = \{ \{x, y \}, \{ y, z \}, \{x, z \} \}$, where $H'= (V, \{ e_1, e_2, e_k \})$. 
In particular, $H'$ is not acyclic, and thus $H$ is not $\beta$-acyclic---a contradiction.
Hence, $E$ satisfies the Helly property.

\end{proof}

Note that the hypertree shown in Figure~\ref{fig:gammahard} (right) is $\gamma$-acyclic. Thus, by Remark~\ref{rem:SPhard}, the class of $\gamma$-acyclic hypertrees is hard to learn with $\SP$-queries. Orderly hypertrees (of diameter at least three) therefore strictly contain the structurally richest class in the acyclicity hierarchy that can be learned with $o(n^2)$ $\SP$-queries in the worst case, as we will demonstrate below.

\section{Skeletons of Hypergraphs}

%\textcolor{violet}{[SZ] Isn't the following proposition also true for orderly hypergraphs in general (not just orderly hypertrees)?}

Orderliness of a hypergraph $H=(V,E)$ implies that $V$ can be partitioned into disjoint subsets each of which is either the set of private vertices of an edge or the intersection of two edges.

\begin{proposition}\label{prop:orderlyPartition}
    For an orderly hypergraph $H = (V,E)$ the 
sets in $\bigcap^2_E = \{ e \cap e' \;|\; e, e' \in E, \; e\ne e', \;  e\cap e'\neq \emptyset \}$ together with the sets $P_H(e)$, for $e \in E$, form a partition of $V$. Any two vertices $v$ and $v'$ in the same part are \emph{equivalent} in the sense that if $v e_1 ...e_{t-1} x$ is a path in $H$ then $v' e_1 ...e_{t-1} x$ is a path in $H$ (so the distances from $v$ and $v'$ to any third vertex $x$ are identical). 
\end{proposition}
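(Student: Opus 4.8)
The plan is to prove the two assertions separately. \emph{For the partition}, coverage is immediate: every vertex of $V$ lies in at least one edge (as $H$ is connected), and if $v$ lies in a unique edge $e$ then $v\in P_H(e)$, while if $v$ lies in two distinct edges $e,e'$ then $v\in e\cap e'$, a member of $\bigcap^2_E$. For pairwise disjointness, every case involving a set $P_H(e)$ is immediate from the definition of a private vertex: such a vertex lies in no edge other than $e$, hence in no $P_H(e')$ with $e'\ne e$ and in no intersection $f\cap f'$ with $f\ne f'$ (since one of $f,f'$ differs from $e$). The one case requiring an argument is that any two members $S=e_1\cap e_2$ and $S'=e_3\cap e_4$ of $\bigcap^2_E$ are either identical or disjoint, and this is exactly where orderliness is used: if some vertex lies in $S\cap S'$ then $S\cap e_3\ne\emptyset$, so orderliness applied to the intersecting pair $e_1,e_2$ and the edge $e_3$ forces $S\subseteq e_3$; likewise $S\subseteq e_4$, whence $S\subseteq S'$; the symmetric argument gives $S'\subseteq S$, so $S=S'$. (Empty sets $P_H(e)$ are simply discarded, and $\bigcap^2_E$, being a set, lists each intersection once.)

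\emph{For the equivalence}, the key preliminary fact is that two vertices $v,v'$ lying in a common part belong to exactly the same edges. If the part is $P_H(e)$ this is clear, since the only edge meeting either vertex is $e$; if the part is $S=e_1\cap e_2$, orderliness gives it, since for any edge $f$ with $v\in f$ we have $S\cap f\ne\emptyset$, hence $S\subseteq f$, so $v'\in f$ as well, and the reverse inclusion is symmetric. Write $\mathcal E$ for this common family of edges. Now fix any $x$ with $x\ne v$ and $x\ne v'$ (the case $v=v'$ being trivial) and any path $v=u_1,e_1,u_2,\dots,e_{t-1},u_t=x$. Since $e_1\in\mathcal E$ we have $v'\in e_1$. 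If $v'$ does not occur among $u_2,\dots,u_{t-1}$, then (as also $v'\ne x=u_t$) the vertices $v',u_2,\dots,u_t$ are pairwise distinct and $v',e_1,u_2,\dots,e_{t-1},u_t$ is a path from $v'$ to $x$ along the same edge sequence, of the same length. If instead $v'=u_j$ for the (necessarily unique) index $j\in\{2,\dots,t-1\}$, then $e_{j-1},e_j\in\mathcal E$, hence $v\in e_{j-1}\cap e_j$, and interchanging the endpoint vertex $u_1=v$ with the interior vertex $u_j=v'$, while leaving every edge and every other vertex in place, again produces a valid path from $v'$ to $x$ along $e_1,\dots,e_{t-1}$: the incidences at the two altered positions hold because $v,v'\in e_1$ and $v\in e_{j-1}\cap e_j$, and no vertex is repeated because the $u_i$ were distinct and $v\ne v'$, $v'\ne x$. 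In either case we obtain a path from $v'$ to $x$ of the same length as the given path from $v$; applying this to a shortest path yields $d_H(v',x)\le d_H(v,x)$, and symmetry then gives $d_H(v,x)=d_H(v',x)$.

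I expect the second case in the equivalence argument to be the main obstacle: the name $v'$ may already appear as an interior vertex of the chosen path from $v$ to $x$, so one cannot simply relabel its first vertex. What rescues it is the transposition of $v$ and $v'$ inside the path, which is legitimate precisely because $v$ and $v'$ belong to the same edges -- itself a consequence of orderliness. (If only the distance statement is wanted, it suffices to run the argument on shortest paths, in which case this case cannot occur at all, since a shortest path from $v$ through $v'$ to $x$ could be shortcut, using that $v$ lies in every edge containing $v'$.)
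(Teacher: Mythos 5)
Your proof is correct, and on the partition claim it follows essentially the same route as the paper: coverage is immediate, the cases involving a set $P_H(e)$ are immediate from privateness, and the only substantive case---two overlapping members of $\bigcap^2_E$---is resolved by orderliness. The paper disposes of that case in one line (the vertex lies in $e_1\cap e_1'\cap e_2\cap e_2'$ while $e_1\cap e_1'\ne e_2\cap e_2'$, ``violating orderliness''); your two-step version ($S\subseteq e_3$ and $S\subseteq e_4$, hence $S\subseteq S'$, then symmetry) spells out exactly why that is a violation, which is a mild improvement in rigour rather than a different idea. Where you genuinely go beyond the paper is the equivalence claim: the paper's proof stops after the partition and never argues the path-replacement statement. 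Your preliminary lemma that two vertices in a common part lie in exactly the same edges (again via orderliness in the intersection case) is the right key fact, and your observation that the literal substitution $v\mapsto v'$ can fail when $v'$ already occurs as an interior vertex of the path---repaired by transposing $v$ and $v'$, or avoided entirely by working with shortest paths---is a real subtlety that the paper's statement glosses over. No gaps; the only cosmetic point is that coverage of $V$ by the parts presupposes every vertex lies in some edge, which you justify by connectivity (an assumption the paper makes globally).
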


%The proof is straightforward and given in Appendix~\ref{app:}.
\begin{proof}
Clearly, $\bigcap^2_E\cup (\bigcup_{e\in E}P_H(e))=V$. Moreover, given $e\in E$, no vertex $v\in P_H(e)$ is contained in $P_H(e')$ for any $e'\in E\setminus\{e\}$ or in  $e' \cap e'' $ for any $e',e'' \in E$, $e'\ne e''$. Now suppose a vertex $v\in V$ belongs to two distinct sets $e_1 \cap e_1'\in \bigcap^2_E$, $e_2 \cap e'_2\in\bigcap^2_E$, where $e_i,e'_i\in E$. Then the intersection $e_1\cap e'_1\cap e_2\cap e'_2$ contains $v$, but $e_1\cap e'_1\ne e_2\cap e'_2$, which immediately violates the definition of orderliness. 
\end{proof}   

The disjoint partition property facilitates a helpful representation of an orderly hypergraph as a (conventional) bipartite graph.

\begin{definition}\label{def:skeleton}
Let $H(V, E)$ be any hypergraph. The \emph{skeleton graph} of $H$, denoted $S(H)$, is defined as the bipartite graph with the following properties.
\begin{itemize}
    \item Nodes in one part are black and in the other part are colored (either blue or red);
    \item Black nodes correspond to the elements of $E$, the edges of $H$. Blue nodes correspond to the non-empty sets of private vertices $P_H(e)$, $e \in E$. Red nodes correspond to the elements of $\bigcap^2_E$, the non-empty intersections of distinct edges in $E$; and
    \item Edges of $S(H)$ join 
    (a) blue nodes to their corresponding (black) edge, and 
    (b) red nodes to all (black) edges whose common intersection is the set associated with that red node.
\end{itemize}
\end{definition}

 Hypergraphs can be uniquely reconstructed from their skeleton graphs and vice versa, so that the problem of learning one is equivalent to the problem of learning the other.
 %(see Appendix~\ref{app:}):

\begin{lemma}
Let $H\!=\!(V, E)$, $H'\!=\!(V, E')$ be orderly hypergraphs. If $S(H) = S(H')$ then $H = H'$. 
\end{lemma}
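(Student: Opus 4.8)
The plan is to reconstruct the vertex set $V$ and the edge set $E$ of an orderly hypergraph $H$ purely from the combinatorial data of $S(H)$, and then observe that this reconstruction is deterministic, so that $S(H)=S(H')$ forces $H=H'$. First I would use Proposition~\ref{prop:orderlyPartition}: for an orderly hypergraph, the private-vertex sets $P_H(e)$ together with the pairwise intersections in $\bigcap^2_E$ partition $V$. The nodes of $S(H)$ are exactly these parts (the blue and red nodes), together with the black nodes naming the edges of $H$; since $H$ and $H'$ share the same vertex set $V$, and the colored nodes of $S(H)$ carry, as their labels, the actual subsets of $V$ they represent (Definition~\ref{def:skeleton} identifies a blue node with the set $P_H(e)$ and a red node with the set in $\bigcap^2_E$), the equality $S(H)=S(H')$ gives a bijection between the black nodes of the two skeletons and an identification of their colored nodes as identical subsets of $V$.

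Next I would show how to recover each edge $e\in E$ from $S(H)$. Fix a black node $b$ corresponding to $e$. Its neighbours in $S(H)$ are: at most one blue node, carrying $P_H(e)$, and some collection of red nodes, each carrying a set of the form $e\cap e'$ for the edges $e'$ meeting $e$ in that particular intersection. The claim I need is that $e$ is exactly the union of the sets attached to the neighbours of $b$, i.e. $e = P_H(e)\ \cup\ \bigcup\{\,r : r \text{ a red node adjacent to } b\,\}$. The inclusion ``$\supseteq$'' is immediate since private vertices of $e$ and any intersection $e\cap e'$ lie in $e$. For ``$\subseteq$'', take $v\in e$; by Proposition~\ref{prop:orderlyPartition} the vertex $v$ lies in exactly one part of the partition — either some $P_H(e')$ or some intersection $e'\cap e''$. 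If $v$ is private to some $e'$, then since $v\in e$ we must have $e'=e$, so $v$ is accounted for by the blue neighbour of $b$. Otherwise $v\in e'\cap e''$ for distinct $e',e''$; I would argue (using orderliness) that the part of the partition containing $v$ is a subset of $e$, hence is precisely a set of the form $e\cap e^*$ for some $e^*$, so the corresponding red node is adjacent to $b$ in $S(H)$ and covers $v$. The slightly delicate point here is the definition of adjacency in Definition~\ref{def:skeleton}(b): a red node is joined to all black edges whose \emph{common} intersection equals that red node's set. So I must check that if $v$'s part $Q$ is contained in $e$, then $e$ belongs to the family of edges whose common intersection is $Q$ — this is where orderliness is used again, to show that $Q = \bigcap\{f\in E : Q\subseteq f\}$, so that $e$ is genuinely one of the edges defining that red node.

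With this established, the proof concludes quickly: from $S(H)=S(H')$ we get the same black nodes, the same incident colored nodes (as subsets of $V$), and hence, by the union formula, the same edge $e$ for each matched black node; this gives $E=E'$ and therefore $H=H'$. The main obstacle, as indicated, is the second step — verifying that adjacency in the skeleton faithfully records \emph{which} edges contain a given intersection-part, i.e. proving $Q=\bigcap\{f\in E : Q\subseteq f\}$ for each red part $Q$. This is exactly the kind of statement orderliness is designed to give: if $Q = e_1\cap e_2$ and some further edge $f$ satisfies $Q\subseteq f$ but is not among the edges whose common intersection is $Q$, one derives a violation of the orderliness condition (either $Q\subseteq f$ fails or $Q\cap f=\emptyset$, the only two options allowed, and the latter is absurd when $Q\subseteq f$). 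I would also need a small remark that every vertex of $H$ actually appears in some colored node (no ``orphan'' vertices), which again is just Proposition~\ref{prop:orderlyPartition}; and a remark that a black node with no blue neighbour corresponds to an edge with no private vertices, which is harmless. Everything else is bookkeeping.
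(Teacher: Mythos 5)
Your proof is correct and takes essentially the same route as the paper's: the paper's entire argument is the one-sentence assertion that each edge equals the union of the colored nodes adjacent to its black node, which is precisely the union formula you establish. Your write-up just supplies the supporting details (the partition property from Proposition~\ref{prop:orderlyPartition} and the observation that a red part $Q$ satisfies $Q=\bigcap\{f\in E: Q\subseteq f\}$, which holds trivially since the two edges defining $Q$ belong to that family) that the paper leaves implicit.
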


\begin{proof} The black nodes of $S(H)$ correspond to the edges of $H$. The vertices forming the edge corresponding to a given black node are exactly the vertices associated with the colored nodes adjacent to that black node.
\end{proof}

Note that the color of nodes is implicit in the skeleton structure: blue nodes have degree one, and red nodes have degree at least two. 
%{\red
%This is not quite right for hypergraphs. For example a cycle with no private vertices.
%}
%We refer to edges incident with blue nodes as \emph{fringe} edges.
For a given orderly hypergraph $H$, denote by $[v]$ the colored node of $S(H)$ that contains vertex $v$.

If $H$ is a hypergraph, a path $P$ joining two nodes in its skeleton graph $S(H)$ has length $\lambda(P)$ given by the number of black nodes in $P$. Accordingly, if $H$ is an orderly hypergraph and $v_1e_1v_2\cdots  e_{t-1}v_{t}$ is a path from vertex $v_1$ to vertex $v_t$ in $H$, then $P=[v_1]e_1[v_2] \cdots  e_{t-1}[v_{t}]$ and
$P'=[v_1]e_1[v_2] \cdots  e_{t-1}$ are paths in $S(H)$, where 
$\lambda(P)=\lambda(P')=t-1$.

As we have noted, our primary focus is on orderly hypertrees. 
Figure~\ref{fig:hypertree+skeleton} illustrates an orderly hypertree and its corresponding skeleton graph.
Orderly hypertrees have the property that their skeleton graph is a tree, something that will be exploited by our learning algorithm for orderly hypertrees in Section~\ref{sec:CC}.

\begin{figure}[t]
  \centering
  \includegraphics[width=0.5\textwidth, scale=0.1]{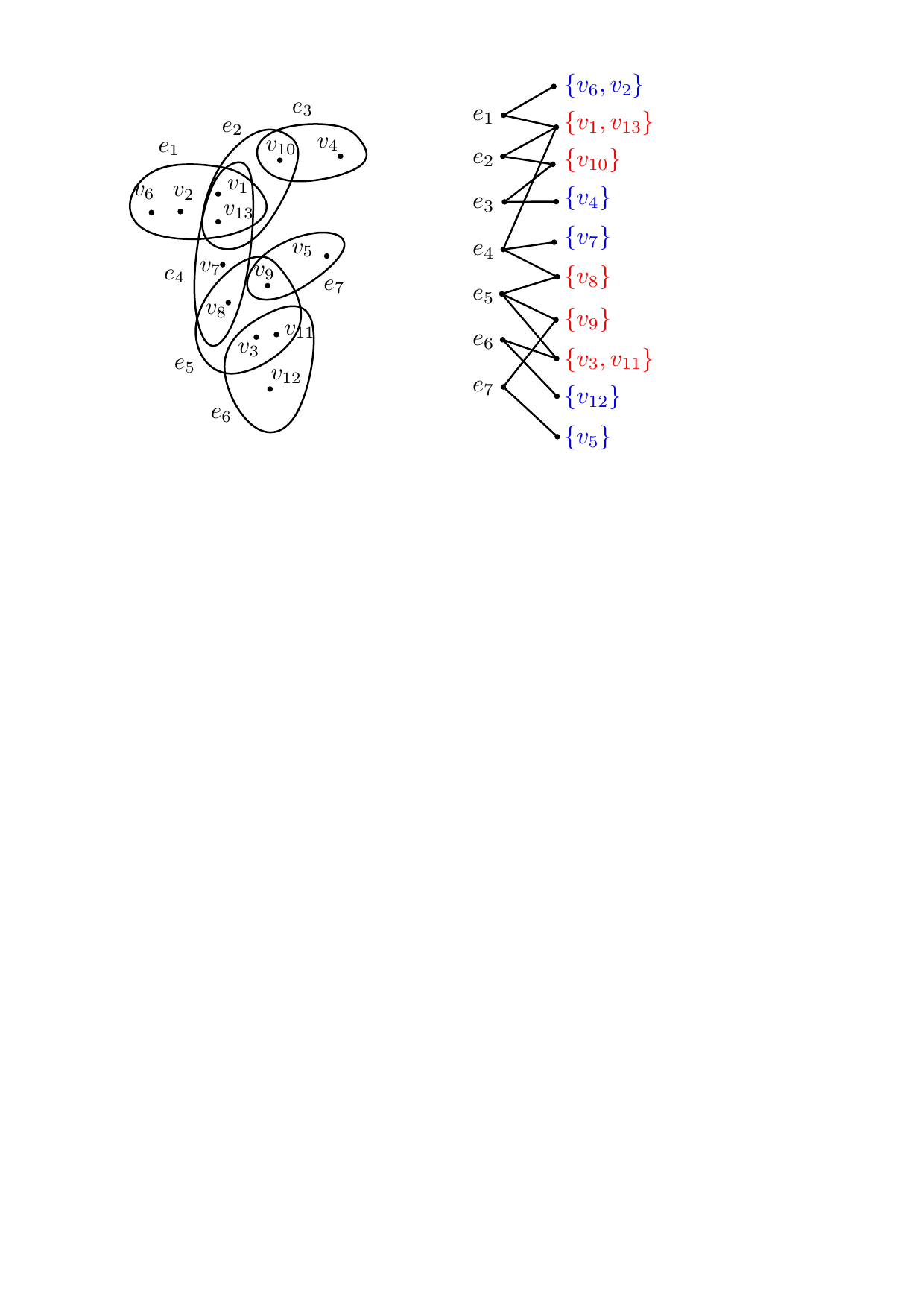}
 \caption{Orderly hypertree (left) and its skeleton graph (right)}
  \label{fig:hypertree+skeleton}
\end{figure}

\begin{remark}\label{rem:uniquepath}
    If $H$ is an orderly hypertree there is a unique path in $S(H)$ joining a specified pair of nodes in $S(H)$. 
    If $P$ joins $[u]$ and $[v]$ in $S(H)$ then $\lambda(P)= d_H(u, v)$.
\end{remark}

%Hence, the skeleton of an orderly hypertree is a tree. 
In fact, by Proposition~\ref{prop:orderlyPartition} and Remark~\ref{rem:uniquepath}:

\begin{claim} Let $H= (V, E)$ be a hypergraph. Then the following three statements are equivalent. (i)~$H$ is an orderly hypertree. (ii)~$H$ is a hypertree and the colored nodes of $S(H)$ form a partition of $V$. (iii)~$S(H)$ is a tree.
\end{claim}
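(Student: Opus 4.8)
The plan is to prove the three-way equivalence by establishing the cycle (i) $\Rightarrow$ (iii) $\Rightarrow$ (ii) $\Rightarrow$ (i), leaning heavily on Proposition~\ref{prop:orderlyPartition} and Remark~\ref{rem:uniquepath}, together with the structural characterization of hypertrees via chordality of the line graph and the Helly property (Remark~\ref{rem:hypertree}).

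\textbf{Step 1: (i) $\Rightarrow$ (iii).} Assume $H$ is an orderly hypertree. By Proposition~\ref{prop:orderlyPartition}, the colored nodes of $S(H)$ partition $V$, so $S(H)$ is a well-defined bipartite graph whose node set is $E$ together with this partition. I first observe $S(H)$ is connected: since $H$ is connected, any two edges of $H$ are linked by a path in $H$, and (as noted just before Remark~\ref{rem:uniquepath}) such a path projects to a path in $S(H)$ through the corresponding black nodes; every colored node is adjacent to some black node by construction, so connectivity of $S(H)$ follows. It then remains to show $S(H)$ is acyclic. Here I would invoke Remark~\ref{rem:uniquepath}, which already asserts that for an orderly hypertree there is a \emph{unique} path in $S(H)$ between any pair of nodes — a connected graph with unique paths between all node pairs is a tree. (If one prefers not to treat Remark~\ref{rem:uniquepath} as given for the acyclicity direction, an alternative is: a cycle in $S(H)$ would yield, reading off the black nodes $e_1, e_2, \ldots, e_k$ around it with distinct colored nodes between consecutive ones, a configuration forcing either a chordless cycle of length $\geq 4$ in $L(H)$ or a Helly-property violation, contradicting that $H$ is a hypertree; this mirrors the argument in the proof of the preceding Claim.)

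\textbf{Step 2: (iii) $\Rightarrow$ (ii).} Assume $S(H)$ is a tree. That $S(H)$ is even well-defined as a bipartite graph in which colored nodes partition $V$ is essentially the content of Definition~\ref{def:skeleton} combined with the hypothesis — but I should be careful: in general (non-orderly) hypergraphs, the sets in $\bigcap^2_E$ need not be disjoint, so "$S(H)$ is a tree" should be read as including that these colored sets are genuinely disjoint (each vertex lies in a unique colored node). Granting that, the partition claim in (ii) is immediate. It remains to show $H$ is a hypertree. I would do this by exhibiting the host tree $T$ required by Definition~\ref{def:hypertree}, or equivalently by verifying chordality of $L(H)$ and the Helly property. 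The cleanest route: show that $L(H)$ is obtained from the tree $S(H)$ in a controlled way — two edges $e, e'$ of $H$ intersect iff the black nodes $e, e'$ in $S(H)$ share a common colored neighbor (a red node), and since $S(H)$ is a tree, the structure of such shared neighborhoods is rigid enough to preclude chordless $4$-cycles in $L(H)$ and to force the Helly property (a family of pairwise-intersecting edges must, via the tree structure, all hang off a single red node). Alternatively, one constructs $T$ directly by subdividing/expanding $S(H)$: place the private vertices and intersection vertices as a tree consistent with $S(H)$, and check each hyperedge is a subtree.

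\textbf{Step 3: (ii) $\Rightarrow$ (i).} Assume $H$ is a hypertree and the colored nodes of $S(H)$ partition $V$. I must show $H$ is orderly, i.e., for distinct $e_1, e_2 \in E$ with $S := e_1 \cap e_2 \neq \emptyset$ and any $e \in E$, either $S \subseteq e$ or $S \cap e = \emptyset$. Suppose not: some $e$ contains $y \in S$ but misses some $x \in S$. The colored-node partition hypothesis tells us which colored node contains $x$ and which contains $y$; since $x, y \in e_1 \cap e_2$, both lie in the \emph{same} colored node (either a single red node $e_1 \cap e_2$, or, if $e_1 \cap e_2$ happens to be a union of such sets — but the partition hypothesis rules that out, forcing $e_1 \cap e_2$ to be exactly one red node). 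Then any edge meeting that red node contains all of it, so $e \ni y$ forces $e \ni x$, a contradiction. The subtlety I would be most careful about is exactly this point: the partition hypothesis in (ii) is doing real work in pinning down that pairwise intersections are "atomic" (single colored nodes), and I would state that deduction explicitly rather than waving at it.

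\textbf{Main obstacle.} The delicate part is Step 2, specifically arguing from "$S(H)$ is a tree" to "$H$ is a hypertree" — i.e., producing the host tree $T$ or verifying chordality and Helly cleanly. The reverse direction (i) $\Rightarrow$ (iii) is mostly bookkeeping given Proposition~\ref{prop:orderlyPartition} and Remark~\ref{rem:uniquepath}, and (ii) $\Rightarrow$ (i) is a short local argument once the partition hypothesis is correctly exploited. But Step 2 requires translating the global tree structure of $S(H)$ into the two somewhat different global conditions (chordality of $L(H)$, Helly property), and the tidiest formulation of that translation — likely "edges sharing a vertex correspond to black nodes with a common red neighbor in the tree $S(H)$, and a tree has no cycles among such relations" — needs to be set up with some care to avoid hand-waving, especially around multi-edge intersections and the Helly condition for families of size $> 2$.
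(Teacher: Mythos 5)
The paper offers no written proof of this claim at all---it is stated as an immediate consequence of Proposition~\ref{prop:orderlyPartition} and Remark~\ref{rem:uniquepath}---so your cycle (i)~$\Rightarrow$~(iii)~$\Rightarrow$~(ii)~$\Rightarrow$~(i) is necessarily a different (and more explicit) route than anything in the text. The overall strategy is sound, and Steps~1 and~3 are essentially complete: in Step~3 the key deduction is exactly the one you flag, namely that under the partition hypothesis any edge $e$ meeting $S=e_1\cap e_2$ in a vertex $y$ satisfies $e\cap e_1=[y]=S$, hence $S\subseteq e$. One caveat on Step~1: Remark~\ref{rem:uniquepath} is itself stated without proof and, given connectivity, is essentially equivalent to the acyclicity of $S(H)$, so invoking it for (i)~$\Rightarrow$~(iii) is close to circular; your fallback argument via chordality of $L(H)$ and the Helly property is the one that should be regarded as the actual proof of that direction.

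The one place where the proposal stops short of a full argument is the step you yourself identify: (iii)~$\Rightarrow$ ``$H$ is a hypertree.'' This is a genuine gap in the write-up, though not in the approach, because the host-tree construction you gesture at does go through and should be made explicit: root $S(H)$, pick a representative $r_N$ in each colored node $N$ and join the remaining vertices of $N$ to $r_N$ as a star, and for each black node $e$ with parent colored node $N_0$ and child colored nodes $N_1,\dots,N_k$ add the edges $r_{N_0}r_{N_i}$. The resulting graph $T$ on $V$ is a tree mirroring $S(H)$, and each hyperedge (the union of the vertex sets of its colored neighbors) induces a connected subtree of $T$, which is exactly what Definition~\ref{def:hypertree} requires; this is cleaner than re-verifying chordality and Helly separately. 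Finally, your worry about whether ``$S(H)$ is a tree'' already forces the colored nodes to be disjoint can be resolved affirmatively rather than built into the reading of (iii): if a vertex $x$ lay in two distinct red nodes, then at least three edges through $x$ would have at least two distinct pairwise intersections, and the corresponding black and red nodes form a cycle of length four or six in $S(H)$, contradicting (iii).
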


\begin{remark}
  Let $H$ be an orderly hypertree. Since the distances between all pairs of nodes in $S(H)$ uniquely determine the tree $S(H)$, it follows that---unlike the situation for general hypertrees---the distances between all pairs of vertices in an orderly hypertree $H$ uniquely determine $H$.  
\end{remark}

Thus, given the skeleton graphs of two orderly hypertrees, their isomorphism can be tested in time proportional to the sum of their edge degrees (the size of the skeleton graphs). In particular, orderly hypertree isomorphism  can be tested in time $O(n \Delta \log_{\Delta} m)$, assuming reconstruction can be efficiently implemented.
Similarly, other properties of a given orderly hypertree, such as its diameter (the length of the longest path) or centroid, can be computed from this representation in time proportional to the sum of their edge degrees.

\section{Learning Orderly Hypertrees With $\SP$-Queries}\label{sec:CC}

The literature on learning (conventional) graphs provides at least two approaches demonstrating that trees can be learned with $O(n\log_\Delta n)$ $\SP$-queries. The first approach is due to \citet{Hein}, who proved the weaker claim that phylogenetic trees with bounded degree can be learned with $O(n\log_2 n)$ $\SP$-queries between leaves. \citet{BastideG25} obtained the  $O(n\log_\Delta n)$ bound with a different method, and for general trees with degree bounded by $\Delta$. 
Both used graph separators for the efficient implementation of updates. 
%[[Need to describe King+ contribution as well... ]]

We will establish below that orderly hypertrees of diameter at least three can be learned with $O(n\log_\Delta m)$ $\SP$-queries. To obtain this result 
%we would have had the option to attempt a generalization of  
it would have been natural to try and generalize the apparently simpler approach of
Bastide and Groenland. 
%to hypertrees or the alternative option of generalizing Hein's approach. While Bastide and Groenland's method appears simpler, 
However, Hein's approach has the advantage that it is designed to insert vertices incrementally into an initially empty tree. We will show that this approach can be generalized to an \emph{online}\/ learning algorithm for orderly hypertrees. 
While superficially similar to Hein's approach our algorithm has to contend with complications associated with hyperedges, among which are the facts  that in an orderly hypertree $H$ (i) paths in $H$ that share a hyperedge could be vertex disjoint, and (ii) hyperedges can participate in more than two cliques in the line graph of $H$.

\paragraph{Online Learning of Hypergraphs.}
Let $V^*$ be some universal set of vertices, and $\mathcal H$ a class of connected hypergraphs $H=(V,E)$, where $V \subseteq V^*$. In this setting, for any target hypergraph $H=(V,E) \in \mathcal H$, the set $V$ (or even its size $n$) is \emph{not} known to the learner. Instead, it is presented as a sequence. With each successive vertex $v_{\mathrm next}$, the learner poses a set of distance queries between  $v_{\mathrm next}$ and previously presented vertices, to an oracle. The learner is said to identify $H$ in an online fashion if, for every $i \in [1:n]$,
the distances obtained from the queries associated with the first $i$ vertices completely determine 
the distances between all pairs among these first $i$ vertices. and hence every hypergraph $H’\in \mathcal H$ that is consistent with these distances satisfies  $H’[v_1, \ldots v_i] = H[v_1, \ldots v_i]$.
In particular, after all $n$ vertices have been presented, the distance profile is unique to $H$, among hypergraphs on $V$ within ${\mathcal H}$. Note that the cost of (i.e., the number of queries asked by) a learning algorithm $A$ when identifying a target hypergraph $H$ in a class $\mathcal{H}$ now depends on the sequence in which the vertices in $V$ are presented to $A$. 
In general, the cost of online algorithms can be substantially higher than the cost of offline algorithms. For a fair competitive analysis, one needs to consider the cost with respect to other algorithms that deal with the same presentation.

\cite{FMMZ} showed that  the class of hypergraphs consisting of only two intersecting edges  is hard to learn offline with $\SP$-queries. Given any individual vertex $v\in e_1\cap e_2$  in the target hypergraph $H=(V,\{e_1,e_2\})$, the learner needs to know a pair $(v_1,v_2)\in P_H(e_1)\times P_H(e_2)$ in order to determine that $v$ is not private. As long as no such pair is known, all distances observed will be 1, which leaves open the possibility of all the vertices used in queries so far belonging to a single edge. Thus, \emph{every}\/ online learner for the class of orderly hypertrees (even if constrained to have diameter at least three) must incur a worst-case cost quadratic in the number $h$ of vertices presented before two vertices from two distinct edges have occurred in the presentation.

The algorithm we present below consumes $O(h^2+i\Delta\log_\Delta m)$ queries on any vertex sequence of length $i$, which turns out to be asymptotically optimal.

\subsection{Induced Sub-Skeletons of Orderly Hypertrees}

Our main result is an algorithm that learns orderly hypertrees from $\SP$-queries in an online fashion. For any sequence $(v_1,\ldots,v_i)$ processed by the algorithm, it produces a skeleton graph that is consistent with the \emph{sub-skeleton induced by the target hypertree}\/ on the vertex set $\{v_1,\ldots,v_i\}$---a notion we first need to define formally.
Let $H^*=(V^*, E^*)$ be an orderly hypertree, and let $d^*(u, v)$ denote the length of a shortest path joining $u$ and $v$ in $H^*$.
For any $V \subseteq V^*$, denote by $d^*(V,V)$ the set 
$\{ d^*(u, v) \;|\; u,v \in V \}$. The elements of $d^*(V,V)$ define a unique substructure of $S(H^*)$:

\begin{definition}
    The \emph{sub-skeleton} of $H^*$ induced on $V$ is formed from $S(H^*)$ by (i) replacing all colored nodes by their intersection with $V$, and then (ii) choosing the smallest subtree that contains all of the resulting non-empty colored nodes.
\end{definition}

The length of the path joining two colored nodes in a sub-skeleton, whether they are occupied or not, uniquely determines the hypertree distance between vertices that ultimately occupy those nodes. This allows our learning algorithm to infer full distance information from limited distance queries.
Our online algorithm now proceeds in two phases. In Phase 1, it determines whether all vertices presented so far belong to a single edge. As soon as this is no longer true, Phase 2 starts.

\begin{figure}[t]
  \centering
  \includegraphics[width=0.7\textwidth, scale=0.2]{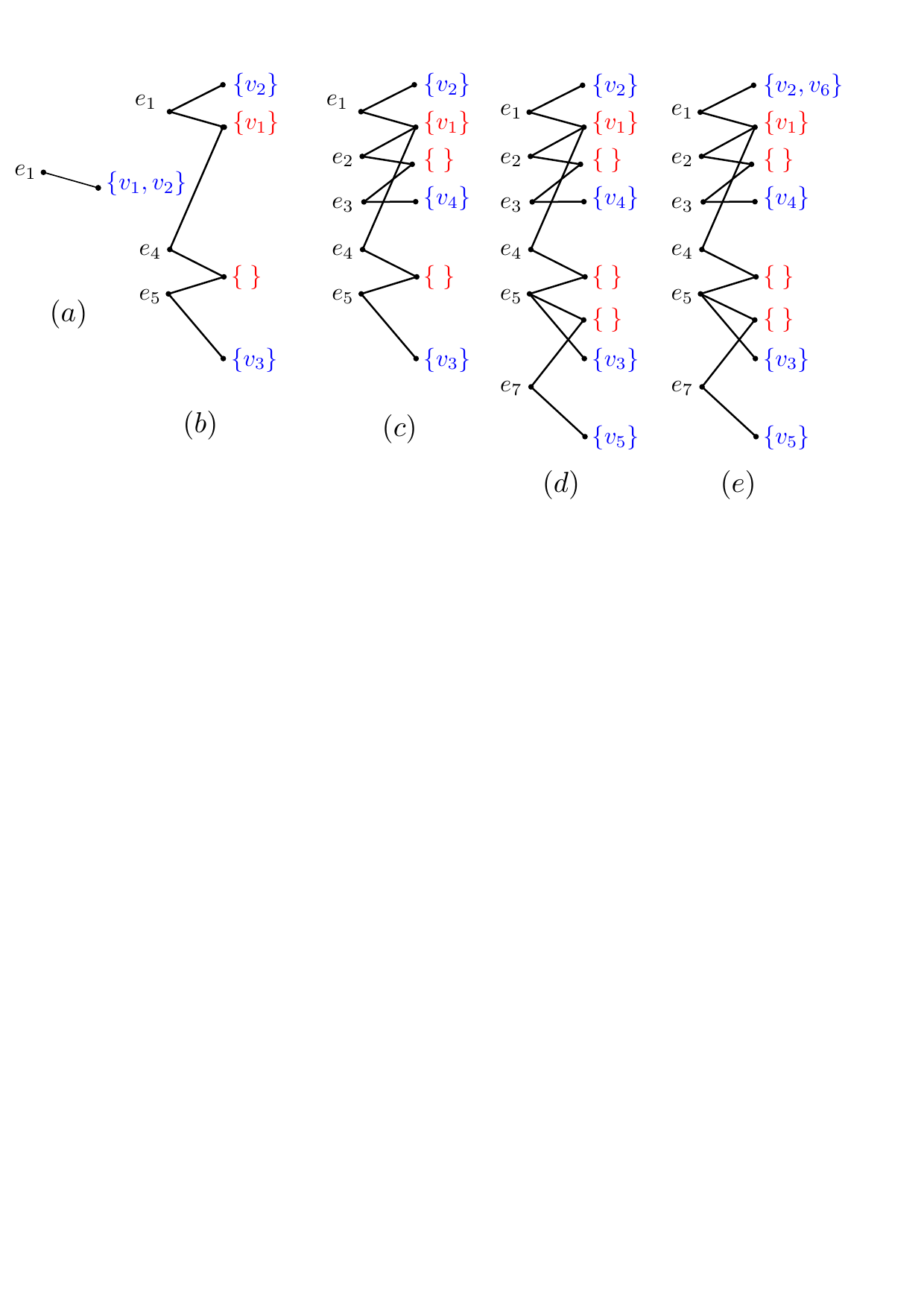}
 \caption{Sub-skeletons of the orderly hypertree from Figure~\ref{fig:hypertree+skeleton} induced on vertex sets $\{v_1, v_{2} \}$ (a), $\{ v_1, v_{2}, v_3 \}$ (b), $\{ v_1, v_{2}, v_3, v_4 \}$ (c), $\{v_1, \ldots, v_5 \}$ (d) and
 $\{v_1, \ldots, v_6 \}$ (e).}
  \label{fig:sub-skeletons}
\end{figure}

\subsection{Phase 1: while there is a one-edge sub-skeleton consistent with all $\SP$-queries so far}

Suppose that vertices are indexed by their position in the insertion sequence. Addition of a new vertex $v_{\mathrm next}$, ${\mathrm next}>1$ involves querying its distance from all vertices $v_j$, $j < {\mathrm next}$. While $d^*(v_j, v_{\mathrm next})= 1$, for all $j < {\mathrm next}$, 
$v_{\mathrm next}$ is added to the single blue node in the skeleton containing all of the vertices $v_j$, $j < {\mathrm next}$ (private vertices of a single edge). The phase ends when it is discovered that $d^*(v_j, v_{\mathrm next})> 1$, for some $v_j$, $j < {\mathrm next}$. The single blue node is identified as the attachment node $N_A$ and the skeleton is updated as described in Phase 2 below.

 For the hypertree illustrated in Figure~\ref{fig:hypertree+skeleton}, Phase 1 ends with the insertion of vertex $v_3$. Figure~\ref{fig:sub-skeletons} (b) illustrates the sub-skeleton at the transition from Phase 1 to Phase 2.

\subsection{Phase 2: the current sub-skeleton has diameter at least two}

The change in the current sub-skeleton resulting from the insertion of a new vertex $v_{\mathrm next}$ is either (i) the expansion of a colored sub-skeleton node, or (ii) the appendage of a chain of new sub-skeleton nodes at some attachment point in the current sub-skeleton.
Figure~\ref{fig:sub-skeletons} (c) (d), and (e) illustrate the sub-skeleton after the insertion of the first four, five and six vertices.
Addition of a vertex $v_{\mathrm next}$ in this phase updates the current sub-skeleton by (i) determining the point of update/attachment (using maximal path separators), and (ii) updating the sub-skeleton at this attachment node, denoted $N_A$. %(there are several cases...)

\paragraph{Determining the Point of Update/Attachment}

The location of the node $N_A$ is determined by identifying a sequence of paths in the current sub-skeleton. The paths in question all have the skeleton node $[v_1]$ as one endpoint and a blue node (that we denote by 
%$[v_z]$
$[v_{\bot}]$) as the other endpoint. 

For each such path $P$, we need to determine the node on $P$, denoted $N_{\vdash}$, where either (i) vertex $v_{\mathrm next}$ should be inserted, or  
(ii) the path from node $[v_1]$ to node $v_{\mathrm next}$ departs from $P$.
For this we use $\SP$-queries to determine the distance from $[v_{\bot}]$ to both $v_1$ and $v_{\mathrm next}$. These distances (together with the distance from $v_1$ to $v_{\mathrm next}$) determine both the location of node $N_{\vdash}$ (and ultimately node $N_A$).
%and extent of the sub-skeleton update.
Note that earlier tree-reconstruction methods use the same observation (cf. \citep{Hein}).

\begin{lemma}\label{lem:sub-skeleton}
The pairwise distances (in $H^*$) between any three vertices, $u, v, {\mathrm and\;} w$ uniquely determine the sub-skeleton induced on these vertices. 
%(See Appendix~\ref{app:} for details.)
\end{lemma}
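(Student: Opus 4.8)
The plan is to show that the three pairwise distances $d^*(u,v)$, $d^*(v,w)$, $d^*(u,w)$ pin down the "shape" of the minimal subtree of $S(H^*)$ spanned by the colored nodes $[u]$, $[v]$, $[w]$. Since $S(H^*)$ is a tree (by the Claim characterizing orderly hypertrees), the subtree spanned by three nodes is either a path through all three, or a "spider" (a single branch node, the median, with three legs going out to $[u]$, $[v]$, $[w]$). In either case the combinatorial type of the induced sub-skeleton is completely determined once we know the three path lengths $\lambda$ between the pairs of colored nodes, and by Remark~\ref{rem:uniquepath} those lengths are exactly $d^*(u,v)$, $d^*(v,w)$, $d^*(u,w)$.

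Concretely, first I would recall that in any tree the three pairwise path lengths $a=d^*(u,v)$, $b=d^*(v,w)$, $c=d^*(u,w)$ between three nodes satisfy the four-point/median identity: there is a unique "branch point" (median node $M$) such that $d^*(u,M)+d^*(v,M)=a$, $d^*(v,M)+d^*(w,M)=b$, $d^*(u,M)+d^*(w,M)=c$, giving $d^*(u,M)=(a+c-b)/2$, $d^*(v,M)=(a+b-c)/2$, $d^*(w,M)=(b+c-a)/2$. These three numbers are nonnegative (triangle inequality in a tree) and at most one of them is zero (degenerate/collinear case). So from the three queried distances I can compute the three leg-lengths of the spanning subtree, and hence the entire structure of the induced sub-skeleton: a central node $M$ (which, depending on parity and which legs are nonzero, is a black edge-node or a colored node) with three chains of prescribed lengths ending at the colored nodes $[u]$, $[v]$, $[w]$. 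I then need to check that this reconstructed structure really is the sub-skeleton as defined (replace colored nodes by their intersections with $\{u,v,w\}$, then take the smallest subtree containing the nonempty ones): the nonempty colored nodes are precisely $[u]\cap\{u,v,w\}$, $[v]\cap\{u,v,w\}$, $[w]\cap\{u,v,w\}$ (some may coincide as nodes of $S(H^*)$, which is exactly the collinear/degenerate case), and the smallest subtree containing them is the spanning subtree just described, whose edge-lengths we have recovered.

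The one point requiring a little care—and what I expect to be the main obstacle—is the bookkeeping of node colors and the parity of the half-lengths when the median $M$ falls on a colored node versus a black node, together with the degenerate cases where two of $[u],[v],[w]$ collapse to the same colored node of $S(H^*)$ (so the "triangle" is actually a single path, or even a single point). In those cases $d^*(u,v)=0$ is possible only if $[u]=[v]$, i.e.\ $u,v$ are equivalent in the sense of Proposition~\ref{prop:orderlyPartition}; I should note that $d^*$ is still well-defined and the formulas above degenerate gracefully. I would handle this by a short case analysis: (a) $a+b=c$ or a permutation thereof (collinear), where the sub-skeleton is a single path of length $\max\{a,b,c\}$ with the three colored nodes placed at the appropriate offsets; (b) the strict triangle-inequality case, where $M$ is a genuine branch node and the three legs have positive length. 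In both, the lengths, and therefore by Remark~\ref{rem:uniquepath} the placement of colored nodes, are forced. That establishes that the sub-skeleton on $\{u,v,w\}$ is uniquely determined by the three distances, which is the claim.
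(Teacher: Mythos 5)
Your proposal is correct and takes essentially the same route as the paper: both arguments rest on Remark~\ref{rem:uniquepath} (the skeleton of an orderly hypertree is a tree, so paths between colored nodes are unique and their lengths equal the hypertree distances), from which the spanning subtree of $[u],[v],[w]$ — and hence the induced sub-skeleton — is forced. The paper states this in two lines and leaves the median/leg-length computation and the black-versus-colored parity bookkeeping implicit; your write-up simply makes those details explicit.
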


\begin{proof}
By Remark~\ref{rem:uniquepath}, there is a unique path from $[u]$ to $[v]$ (resp.,  $[u]$ to $[w]$ and  $[v]$ to $[w]$) in $S(H^*)$. Since the length of this path is $d_{H^*}(u, v)$ (resp., $d_{H^*}(u, w)$ and
$d_{H^*}(v, w))$), the sub-skeleton induced on $\{u, v, w \}$ is uniquely determined.
\end{proof}

Consider the current sub-skeleton, expanded to include $v_{\mathrm next}$ and rooted at $[v_1]$.
Removal of the edges along the path $P$ from
$[v_0]$ to $[v_{\bot}]$ yields a collection of disjoint rooted subtrees.
By Lemma~\ref{lem:sub-skeleton}, we can determine node $N_{\vdash}$, the root of
%$T_{\mathrm next}(P)$, 
the subtree in this collection that contains  $[v_{\mathrm next}]$.  

Let $k = d^*(v_1, v_{\mathrm next}) + d^*(v_{\bot}, v_{\mathrm next}) -
d^*(v_1, v_{\bot})$. There are two cases:

($k$ is even)  In this case, $v_{\mathrm next}$ belongs to %the interior of 
%$T_{\mathrm next}(P)$, 
the subtree rooted at the colored node $N_{\vdash}$ at distance $d^*(v_1, v_{\mathrm next})$ from $[v_1]$ on the path between $[v_1]$ to $[v_{\bot}]$. 

($k$ is odd)  In this case, $v_{\mathrm next}$ belongs to %to the interior of 
%$T_{\mathrm next}(P)$, 
the subtree rooted at the black node $N_{\vdash}$ at distance $d^*(v_1, v_{\mathrm next})$ from $[v_1]$ on the path between $[v_1]$ to $[v_z]$.

The process continues, by choosing $[v_{\bot}]$ to be a leaf node in %$T_{\mathrm next}(P)$, 
the subtree rooted at $N_{\vdash}$, until this subtree is reduced to a single node, which is the desired update/attachment node $N_A$.

The choice of node $[v_{\bot}]$ is critical in realizing the desired bound on the number of $\SP$-queries. By  generalizing an observation made by \cite{Hein}, one can show that a greedy choice results in a bound of $\Omega(n \sqrt{m})$ $\SP$-queries; see Appendix~\ref{app:} for details.
The choice (and its associated path) that leads to the efficient (measured in terms of worst-case number of resulting $\SP$-queries) location of the update/attachment point $N_A$ is abstracted as a two player game on $S(H)$, detailed below. 

\paragraph{Skeleton Update}
Lemma~\ref{lem:sub-skeleton} dictates how to update a sub-skeleton at $N_A$.
There are two cases:\\
($k=0$) In this case the new vertex $v_{\mathrm next}$ expands the colored attachment node $N_A$.  \\
($k >0$) In this case a new node $[v_{\mathrm next}]$ is created, linked by a length $k$ alternating chain of black and colored nodes to the attachment node $N_A$. 

If the attachment node $N_A$ is blue (with adjacent black node $N_B$) then $k$ must be even. In this case, $N_A$ becomes red and the vertices currently associated with that node that have distance $k/2 +1$ from $v_{\mathrm next}$ are split off into a new blue node adjacent to $N_B$. This splitting can occur at most $\Delta$ times for any vertex, since each split adds one to the degree of the edge associated with $N_B$. Thus the total number of $\SP$-queries needed to perform the splits in the first $i$ insertions is $O(i \Delta)$. 

In our example, $v_2$ is initially inserted in $[v_1]$, but $[v_1]$ is split on the insertion of $v_3$ (Figure~\ref{fig:sub-skeletons} (a) and (b)). Insertion of $v_3$ and $v_4$  
lead to even length attachments at $[v_1]$ (Figure~\ref{fig:sub-skeletons}(b) and (c)). Following this, inserting $v_5$ leads to an odd length attachment at the black node $e_5$ (Figure~\ref{fig:sub-skeletons}(d)). Next, inserting $v_6$ leads to an expansion of node $[v_2]$ (Figure~\ref{fig:sub-skeletons}(e)).

In our example, the insertions of both $v_2$ and $v_4$ lead to even length attachments at $[v_1]$ (Figure~\ref{fig:sub-skeletons}(center left)). Following this, inserting $v_5$ leads to an odd length attachment at the black node $e_5$ (Figure~\ref{fig:sub-skeletons}(center right)). Next, inserting $v_6$ leads to an expansion of node $[v_2]$ (Figure~\ref{fig:sub-skeletons}(right)).

\subsection{Algorithm Analysis: Correctness and Complexity}

The correctness of the update process follows from Lemma~\ref{lem:sub-skeleton}. 
The process of locating the update/attachment point in the current skeleton terminates since the size of 
%$T_{\mathrm next}(P)$ 
the subtree rooted at $N_{\vdash}$
decreases with each successive path $P$.
%However, as we have noted, the worst-case $\SP$-query complexity depends on a careful choice of paths. 
%\paragraph{A Path Game.}
To obtain an $\SP$-query complexity of $O(\Delta \log_\Delta m)$ for each insertion step, %where $\Delta$ is the maximum edge degree in the target hypergraph, 
the method for selecting node $[v_{\bot}]$ (and the associated path) is crucial. 

\begin{definition}
Let $T=(V,E)$ be a rooted tree.
%with vertex degree at most $\Delta$. 
The \emph{path depth game} on $T$ is played in rounds by two players, $\pi_{\min}$ and $\pi_{\max}$. In each round, (i) $\pi_{min}$ first selects a path $P$ in $T$ and deletes all edges on $P$ from $E$ (leaving $V$ unchanged), and then (ii) $\pi_{max}$ selects one of the thus exposed rooted subtrees $T'$ of $T$ and sets $T:=T'$. The game ends when $T$ consists of a single vertex.

We define the \emph{path depth}\/ of $T $ to be the number of rounds after which this game ends, assuming that $\pi_{\min}$ aims to minimize the number of rounds and $\pi_{\max}$ aims to maximize it.
\end{definition}

Imagine that the path game is played on the current sub-skeleton rooted at node $[v_1]$.
The choice of a path $P$ by $\pi_{\min}$ translates directly into a choice of node $[v_{\bot}]$: simply extend $P$ to any leaf node.
The choice of exposed subtree by $\pi_{\min}$ reflects the choice of an adversary forcing the worst-case behaviour of our algorithm.
It follows that the number of $\SP$-queries required, in the worst case, to locate the update/attachment node $N_A$ in any insertion step is at worst three times the path depth of the current sub-skeleton. 
We show that this depth value is in $\Theta(\Delta\log_\Delta m)$, where $\Delta$ denotes the maximum vertex degree in the target skeleton $S(H^*)$ (and thus the maximum edge degree in $H^*$).

\begin{theorem}\label{thm:pathdepth}
Consider the class of all rooted trees $T=(V,E)$ with $\ell$ leaves and maximum vertex degree at most $\Delta$. The worst-case path depth of such trees is in $\Theta(\Delta\log_\Delta \ell)$.
\end{theorem}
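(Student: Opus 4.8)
The plan is to exhibit a strategy for $\pi_{\min}$ that ends the game in $O(\Delta\log_\Delta \ell)$ rounds, no matter how $\pi_{\max}$ responds. First I would observe that any path $P$ in a rooted tree, when its edges are deleted, exposes a collection of subtrees, each hanging off a vertex of $P$; since each vertex of $P$ has degree at most $\Delta$ and $P$ itself uses up two of the incident edges at internal vertices (one at endpoints), the \emph{number} of exposed subtrees hanging off $P$ is at most roughly $(\Delta-1)\cdot|P|$ — but the point is not their number, it is how the $\ell$ leaves are distributed among them. The natural greedy choice for $\pi_{\min}$ is the root-to-leaf path that always descends toward the child whose subtree currently contains the most leaves (a ``heavy path''). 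With this choice, every exposed subtree hangs off a vertex $x$ of the heavy path and is rooted at a \emph{light} child of $x$; a standard heavy-path argument shows such a subtree contains at most half the leaves of the subtree rooted at $x$, hence at most half the leaves of the whole current tree. So after one round, whatever subtree $\pi_{\max}$ picks has at most $\ell/2$ leaves, giving path depth $O(\log \ell)$ — which is too weak; I need the $\Delta$-refinement. The sharper claim is that one should instead bound the depth in terms of $\log_\Delta$: after deleting a heavy path, the exposed subtrees off a single path-vertex $x$ that is at ``light-depth'' $j$ have $\le \ell/\Delta^{?}$ leaves only if we are cleverer. The correct known bound (from the tree-separator arguments of Hein and of Bastide--Groenland) is that one can choose $P$ so that \emph{every} exposed subtree has at most $\ell/2$ leaves AND has ``weight depth'' reduced; iterating, the number of rounds to kill $\ell$ leaves is $O(\Delta \log_\Delta \ell)$ because in $\log_\Delta \ell$ phases the leaf count drops by a factor $\Delta$, and within each phase $O(\Delta)$ rounds suffice to chip away the $O(\Delta)$ heavy branches. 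I would make this precise by a potential-function argument: define $\Phi(T) = \lceil \log_\Delta(\ell(T)+1)\rceil$ plus a secondary term counting how many of the current root's branches still have more than a $1/\Delta$ fraction of the leaves, and show $\pi_{\min}$ can force $\Phi$ to drop by at least $1$ every round.

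\textbf{Lower bound.}
For the matching $\Omega(\Delta\log_\Delta\ell)$ bound, I would construct a family of trees on which $\pi_{\max}$ has a strategy forcing that many rounds. The natural candidate is the complete $\Delta$-ary tree of depth $D$, which has $\ell = \Delta^{D}$ leaves, so $\log_\Delta \ell = D$. The adversary strategy for $\pi_{\max}$: whatever path $P$ that $\pi_{\min}$ deletes, the path $P$ can ``pass through'' at most a bounded number of levels in a way that fully clears them; more carefully, $P$ restricted to the path from the root visits one vertex per level it reaches, and at each such vertex it picks one of $\Delta$ children to continue through, leaving $\Delta-1$ untouched child-subtrees at that level. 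The adversary picks one of those $\Delta - 1$ large untouched subtrees — which is a complete $\Delta$-ary tree of depth $D-1$ (it lost at most the contribution of $P$). The subtle point is that $P$ could also branch \emph{downward} — it is a path in a tree, so it goes up from some vertex and down from some vertex — so a single $P$ might ``damage'' two root-to-leaf branches; but it still leaves a complete $(\Delta)$-ary subtree of depth $D - O(1)$ essentially intact, hanging off the spine. I need to argue the depth decreases by only $O(1)$ per round, giving $\Omega(D) = \Omega(\log_\Delta \ell)$ rounds, and then separately get the extra factor of $\Delta$: this comes from a different gadget, a ``star of paths'' or a caterpillar where the root has $\Delta$ children each heading a long path, so that $\pi_{\min}$'s single path can only deal with one of those $\Delta$ branches per round (up to constants), forcing $\Omega(\Delta)$ rounds just at the top level; combining the two gadgets recursively — a depth-$D$ tree where each node spawns $\Delta$ branches and clearing all of them costs $\Omega(\Delta)$ rounds per level — yields $\Omega(\Delta \cdot D) = \Omega(\Delta \log_\Delta \ell)$.

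\textbf{Main obstacle.}
The hard part will be the lower bound, specifically formalizing the adversary argument against an \emph{arbitrary} path choice. A path in a tree is not just a root-to-leaf descent: it can ascend to a common ancestor and descend again, so one deletion can simultaneously interfere with two different heavy branches, and it can also ``skip'' through many levels cheaply. I must show that even exploiting this, $\pi_{\min}$ cannot clear more than $O(1)$ levels-worth of structure (for the $\log_\Delta$ part) and cannot clear more than $O(1)$ of a root's $\Delta$ heavy branches (for the $\Delta$ part) in a single round, and that these two phenomena compose multiplicatively rather than additively. I expect to handle the composition by a careful choice of gadget — likely a tree $T_D$ defined recursively by: $T_0$ is a single vertex; $T_{j}$ has a root with $\Delta$ children, each the root of a path of length $\Theta(\Delta)$ whose far end is the root of a copy of $T_{j-1}$ — and then proving by induction on $j$ that $\pi_{\max}$ forces path depth $\ge c\Delta\cdot j$ on $T_j$, using the fact that $\pi_{\min}$'s path meets the root-fan of $T_j$ in a way that leaves at least $\Delta - 2$ of the pendant-path-plus-$T_{j-1}$ bundles untouched, one of which $\pi_{\max}$ selects and recurses. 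The bookkeeping to confirm $\ell(T_j) = \Delta^{\Theta(j)}$ and hence $j = \Theta(\log_\Delta \ell)$, while routine, is where I would be most careful to get the constants consistent with the upper bound.
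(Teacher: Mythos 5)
Your high-level plan---roughly $\log_\Delta \ell$ phases, each costing $\Theta(\Delta)$ rounds---matches the paper's, but both halves of the argument have concrete gaps. For the upper bound: when $\pi_{\min}$ deletes the edges of a heavy path $P$, the exposed component that $\pi_{\max}$ selects is not a single subtree rooted at one light child of a path vertex; it is a path vertex $x$ together with \emph{all} of its off-path subtrees, and this can retain up to a $(1-1/\Delta)$ fraction of the leaves. So the plain heavy-path strategy yields $O(\Delta\ln\ell)$ rounds, not $O(\log\ell)$, and is a $\log\Delta$ factor away from the target. More seriously, your proposed repair cannot work as stated: a potential $\Phi(T)=\lceil\log_\Delta(\ell(T)+1)\rceil$ plus a secondary term bounded by $\Delta$ starts at $\log_\Delta\ell+\Delta+O(1)$, so if it dropped by $1$ every round the game would end within $O(\Delta+\log_\Delta\ell)$ rounds, contradicting the $\Omega(\Delta\log_\Delta\ell)$ lower bound (take $\ell=\Delta^{\Delta}$). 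The bound must be multiplicative, and a secondary term that inspects only the current root's branches misses that the $\le\Delta$ ``heavy'' subtrees to be eliminated within a phase can sit at arbitrary depths. The ingredient you are missing is the paper's notion of an $i$-fringe-heavy vertex (an $i$-heavy vertex all of whose children are $i$-light): an $(i{+}1)$-light tree has at most $\Delta$ of these, they are pairwise incomparable, $\pi_{\min}$ routes each path from the root to one of them, every choice by $\pi_{\max}$ removes at least one from the interior of the surviving component, and at most $2\Delta$ rounds then force an $i$-light tree.

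For the lower bound, your key observation---a single path meets at most two of a root's principal subtrees, leaving at least $\Delta-2$ intact---is exactly the right one, and it already makes the plain complete $\Delta$-ary tree suffice; no pendant-path gadget is needed. But the recursion as you describe it (``$\pi_{\max}$ selects one of the untouched bundles and recurses'') is the wrong adversary move: descending immediately costs only $O(1)$ rounds per level and forfeits the factor of $\Delta$. The adversary must instead select the component containing the current root, which still carries at least $\Delta-2$ intact copies of the depth-$(j{-}1)$ tree, and descend only after roughly $\Delta/2$ rounds, when forced. The paper packages this as an induction on trees $R_{i,j}$ (a root with $2i+1$ children, each rooting a complete depth-$j$ tree), showing $\pi_{\max}$ forces $i+j(\Delta-1)/2$ rounds. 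Your single-level ``star of paths'' discussion does contain the stay-at-the-root idea; the fix is to carry that same move into the recursive step instead of switching to ``select one bundle and recurse.''
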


\begin{proof} Let $k=\lceil\log_\Delta \ell \rceil$. 
%, where $\ell\le m$ is the number of leaves in $T$.
%The lower bound follows from the fact that our algorithm for learning orderly hypertrees is asymptotically optimal, but we prove it directly in the language of the path depth game for the sake of providing insight. 
To prove the lower bound, suppose $\Delta$ is odd. We show that $\Omega(k \Delta)$ rounds are required for the path depth game with a complete rooted tree $T_k$ of uniform out-degree $\Delta$ and depth $k$ (which has $\Delta^i$ nodes on level $i$).  
Let $R_{i,j}$ denote a rooted tree, whose root has $2i +1$ children, each of these the root of a subtree $T_j$. Note that $R_{(\Delta-1)/2, k-1}=T_k$.
Faced with $R_{i,j}$, where $i>0$ and  $j\ge 0$, $P_{\min}$'s choice of a path contains nodes in a most two of the $2i+1$ principal subtrees. This will leave an untouched subtree $R_{i-1, j}$, in the case $i >1$, or $R_{(\Delta-1)/2, j-1}$, if $i=1$ and $j>0$.
It follows by induction on $i$ and $j$ that $\pi_{\max}$ can force $i + j ((\Delta-1)/2)$  rounds. So, starting with $R_{(\Delta-1)/2, k-1} $, the player $\pi_{\max}$ can force at least  $k(\Delta-1)/2$  rounds.

For the upper bound, let $T$ be any rooted tree with $\ell$ leaves and maximum vertex degree  $\le \Delta$. 
%Select any vertex $r$ in $T$ as a root and interpret and consider the thus rooted version $G$. 
We will call any vertex in $T$ $i$-light, if it roots a subtree that has at most $\Delta^{i-1}$ leaves. All other vertices are called $i$-heavy. A $i$-light ($i$-heavy) subtree is a subtree rooted in a $i$-light ($i$-heavy) vertex. The term \emph{$i$-fringe-heavy vertex}\/ refers to any $i$-heavy vertex all of whose children are $i$-light. 

The game is played by $\pi_{\min}$ in $k$ stages.
%To show that the path depth is in $O(\Delta k)$, it suffices to 
Stage $i$, $i = k, k-1, \ldots 1$  starts with a $i+1$-light tree and, following $O(\Delta)$ rounds, forces $\pi_{\max}$ to select a $i$-light tree. 
Hence, after $k$ stages, consisting of $O(k\Delta)$ rounds in total, the game ends with a subtree consisting of a single vertex.

Consider the $i$-th stage. For any path $(r,u_1,\ldots,u_s,v)$ from $r$ to a $i$-fringe-heavy vertex $v$, all vertices $u_1,\ldots,u_s$ are $i$-heavy, but not $i$-fringe-heavy. Thus, $T$ has  $\le \Delta$ $i$-fringe-heavy vertices. (Otherwise $T$ would have  $>\Delta\cdot\Delta^{i-1}$ leaves,  contradicting our assumption that the stage begins with an $i+1$-light tree.)
%$k=\lceil\log_\Delta m \rceil$.) 
Also, every $i$-heavy vertex lies on a path from $r$ to one of the at most $\Delta$ $i$-fringe-heavy vertices.
$\pi_{\min}$ starts by picking any path $P$ from $r$ to any $i$-fringe-heavy vertex $v$. $\pi_{\max}$ can now choose between any of the subtrees rooted at vertices on $P$. 
Since any such choice results in a subtree with one fewer $i$-fringe-heavy vertex in its interior, it follows that after at most $\Delta$ rounds $\pi_{\max}$ is forced to choose a subtree $T$ that contains no $i$-heavy vertices in its interior.
%in $v$ or any rooted subtree with at most $\Delta-1$ fringe-heavy vertices. 

At this point, either $T$ is $i$-light, which ends the stage, or all of the at most $\Delta$ principle subtrees of $T$ are $i$-light. 
In the latter case, $\pi_{\min}$ can select a path consisting of only a single edge incident to $v$, splitting off a light subtree with each such selection. After no more than $\Delta$ such selections, $\pi_{\max}$ is forced to return a $i$-light tree, ending the stage.
Thus the $i$-th stage ends after  $\le 2\Delta$ rounds. 
\end{proof}

To sum up, we obtain  Theorem~\ref{thm:online} for an algorithm without prior knowledge of $V, n, E, m$, or $\Delta$.

\begin{theorem}\label{thm:online}
    The class of all $n$-vertex orderly hypertrees of diameter at least three can be learned, in an online fashion, using $O( h^2 + i \Delta \log_{\Delta} m)$ $\SP$-queries in total for the first $i$ vertex insertions, where $m$ (resp., $\Delta$) is the (unknown) number of edges
    (resp., maximum edge degree) of the target hypertree, and $h$ is the length of the prefix of the first $i$ insertions that consists of vertices that all belong to a common edge.
\end{theorem}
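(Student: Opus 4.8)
The plan is to combine the structural results about skeletons of orderly hypertrees with the path-depth bound of Theorem~\ref{thm:pathdepth} and the per-insertion accounting already sketched in the algorithm description. The overall argument has two parts: correctness (the algorithm, after processing $v_1,\ldots,v_i$, maintains a skeleton consistent with the sub-skeleton of $H^*$ induced on $\{v_1,\ldots,v_i\}$, hence after all $n$ vertices it has reconstructed $S(H^*)$ and thus $H^*$), and complexity (the stated query bound). For correctness I would argue by induction on $i$. The base cases are Phase~1: while all observed distances are $1$, the one-edge skeleton is the unique sub-skeleton consistent with the answers, and the moment some distance exceeds $1$, the blue node is correctly identified as $N_A$. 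For the inductive step in Phase~2, I would invoke Lemma~\ref{lem:sub-skeleton}: the pairwise distances among $v_1$, $v_{\bot}$, and $v_{\mathrm{next}}$ determine the sub-skeleton on those three vertices, which pins down $N_{\vdash}$ on the current path $P$; iterating the path-selection process strictly shrinks the candidate subtree (the size of the subtree rooted at $N_{\vdash}$ decreases each round), so it terminates at a unique update/attachment node $N_A$, and the update rule (expand $N_A$ if $k=0$, otherwise append a length-$k$ alternating chain, splitting a blue $N_A$ into red when needed) is exactly the one dictated by Lemma~\ref{lem:sub-skeleton}. Since each query answer is used only to apply a forced update, the resulting skeleton is always the induced sub-skeleton, and by the Claim relating orderly hypertrees to their skeleton trees, the final object is $H^*$ itself.

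For the complexity bound I would separate the count into three contributions. First, the Phase~1 cost: each new vertex among the first $h$ is queried against all earlier ones, giving $\sum_{j\le h}(j-1)=O(h^2)$ queries, and this is the only place a quadratic term arises (as the preceding discussion of the lower bound from \citet{FMMZ} explains, this quadratic blow-up is unavoidable for online learners). Second, the cost of locating $N_A$ for each of the remaining insertions: each round of the path-location process costs $O(1)$ $\SP$-queries (the distances from $[v_{\bot}]$ to $v_1$ and to $v_{\mathrm{next}}$), and by the game reformulation the number of rounds is at most three times the path depth of the current sub-skeleton; since the current sub-skeleton is a tree with at most $m$ black nodes, hence $O(m)$ leaves, and maximum degree at most $\Delta$, Theorem~\ref{thm:pathdepth} bounds the path depth by $O(\Delta\log_\Delta m)$, so this contributes $O(i\,\Delta\log_\Delta m)$ over the first $i$ insertions. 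Third, the splitting cost: as already observed, each split of a blue node into red increases the degree of the incident black node, which can happen at most $\Delta$ times per edge, so across the first $i$ insertions the splits cost $O(i\Delta)$, which is absorbed into the second term. Summing gives $O(h^2 + i\Delta\log_\Delta m)$.

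The main obstacle, and the step that needs the most care, is verifying that the parameters fed into Theorem~\ref{thm:pathdepth} are legitimate at every insertion step, i.e.\ that the \emph{current} sub-skeleton (not merely the final $S(H^*)$) has at most $O(m)$ leaves and maximum degree at most $\Delta$. Degree is monotone: every node degree in any induced sub-skeleton is bounded by the corresponding degree in $S(H^*)$ (expanding or contracting colored nodes and pruning does not create new adjacencies), so $\le\Delta$ holds throughout. For the leaf count one must check that a sub-skeleton induced on $i$ vertices has $O(\min(i,m))$ leaves — each leaf is a colored node occupied by at least one of $v_1,\ldots,v_i$ and corresponds to a distinct structure in $S(H^*)$ — so $\log_\Delta m$ is a valid (if not always tight) bound for each step. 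A secondary subtlety is the transition at the end of Phase~1 when $N_A$ is blue and must be split: one has to confirm the parity argument ($k$ even forces a clean split into a new blue node adjacent to $N_B$) is consistent with the induced-sub-skeleton semantics, but this is exactly what Lemma~\ref{lem:sub-skeleton} delivers. Everything else is bookkeeping over the three cost buckets.
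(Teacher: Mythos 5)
Your proposal is correct and follows essentially the same route as the paper: Phase~1 accounts for the $O(h^2)$ term, Lemma~\ref{lem:sub-skeleton} gives correctness of each update, the path-depth game together with Theorem~\ref{thm:pathdepth} bounds the cost of locating $N_A$ by $O(\Delta\log_\Delta m)$ per insertion, and the blue-to-red splits contribute an absorbed $O(i\Delta)$. Your added check that the current sub-skeleton has $O(m)$ leaves and degree at most $\Delta$ (so Theorem~\ref{thm:pathdepth} applies at every step) is a point the paper leaves implicit, and is a worthwhile clarification.
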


\subsection{Offline Learning of Orderly Hypertrees}

Given that the target hypertree has diameter at least three, an offline learner with access to $V$ can ask a linear number of queries of the form $(v_1,v_2)$, $(v_1,v_3)$, $(v_1,v_4)$, until a vertex at distance greater than 1 from $v_1$ is found (some such vertex must exist). The learner can then proceed exactly as the online learner does in Phase 2, and identify the target graph without the $h^2$ overhead. This yields the following result, which is realized by an algorithm that knows (only) $V$ in advance.

\begin{theorem}\label{thm:offline}
    The class of all $n$-vertex orderly hypertrees of diameter at least three can be learned, in an offline fashion, using $O( n\Delta \log_{\Delta} m)$ $\SP$-queries, where $m$ (resp., $\Delta$) is the (unknown) number of edges
    (resp., maximum edge degree) of the target hypertree.
\end{theorem}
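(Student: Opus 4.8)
The plan is to piggyback on the analysis of the online algorithm (Theorem~\ref{thm:online}), replacing its quadratic Phase~1 by a cheap linear-query \emph{probe} that becomes available once the learner is handed the vertex set $V$ (but, as required, nothing about $E$, $m$, or $\Delta$).

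First I would fix an arbitrary $v_1\in V$ and issue the $\SP$-queries $(v_1,v)$ for the vertices $v\in V\setminus\{v_1\}$ in any order, stopping at the first vertex $v_j$ with $d^*(v_1,v_j)\ge 2$. Such a $v_j$ must exist, since if every vertex were at distance at most $1$ from $v_1$ then by the triangle inequality all pairs would be at distance at most $2$, contradicting that the target has diameter at least three. Hence the probe ends after at most $n-1$ queries. By the two-vertex case of Lemma~\ref{lem:sub-skeleton} (equivalently, Remark~\ref{rem:uniquepath}), the single value $d^*(v_1,v_j)$ determines the sub-skeleton induced on $\{v_1,v_j\}$---a path with $d^*(v_1,v_j)\ge 2$ black nodes and blue endpoints $[v_1]$ and $[v_j]$---and this sub-skeleton already has diameter at least two, which is exactly the precondition of Phase~2.

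Second, I would run Phase~2 verbatim: root the current sub-skeleton at $[v_1]$ and insert the remaining vertices of $V$ one at a time, in any order, \emph{including} the vertices found by the probe to be at distance $1$ from $v_1$. By exactly the analysis behind Theorem~\ref{thm:online}, each such insertion locates its update/attachment node via the path-depth game at a cost of $O(\Delta\log_\Delta m)$ $\SP$-queries (Theorem~\ref{thm:pathdepth} bounds the path depth of the current sub-skeleton, whose maximum degree is $O(\Delta)$ and which has $O(m)$ leaves, and each round of the game uses at most three queries), while the update itself is dictated by Lemma~\ref{lem:sub-skeleton} and the blue-node splittings add only $O(\Delta)$ queries amortized per vertex, i.e.\ $O(n\Delta)$ overall. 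Summing: $O(n)$ for the probe, $O(n\Delta\log_\Delta m)$ for the $n-2$ Phase~2 insertions, for a total of $O(n\Delta\log_\Delta m)$. Correctness is inherited from the online analysis: each step maintains a skeleton consistent (in the sense of Lemma~\ref{lem:sub-skeleton}) with the sub-skeleton induced on the vertices processed so far, so after all $n$ vertices are in, the reconstructed skeleton equals $S(H^*)$ and hence identifies $H^*$ uniquely.

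The step requiring the most care---more a matter of bookkeeping than a genuine obstacle, since Phase~2 was designed to be restartable from any sub-skeleton of diameter at least two---is to verify that nothing is lost by skipping Phase~1 and thus never learning the pairwise distances among the probed vertices. The point is that a Phase~2 insertion of a vertex $w$ only ever needs, in each round of the path-depth game, the three distances $d^*(v_1,w)$, $d^*(v_1,v_{\bot})$, $d^*(v_{\bot},w)$, never a full distance profile; by Lemma~\ref{lem:sub-skeleton} these suffice to place $w$ correctly, so in particular the vertices the probe left at distance $1$ from $v_1$ are correctly inserted into---or split out of---the appropriate blue node when their turn comes. It is worth noting that the diameter-at-least-three hypothesis is used in exactly one place, namely to guarantee that the probe succeeds; this is consistent with the fact that orderly hypertrees of diameter two are already hard to learn with $\SP$-queries.
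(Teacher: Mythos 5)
Your proposal is correct and matches the paper's own argument: the paper likewise replaces Phase~1 by a linear scan of queries $(v_1,v_2),(v_1,v_3),\ldots$ until a vertex at distance greater than $1$ from $v_1$ is found (which must exist since the diameter is at least three), and then runs Phase~2 of the online algorithm, inheriting the $O(n\Delta\log_\Delta m)$ bound from Theorem~\ref{thm:online}. Your additional bookkeeping about why the skipped pairwise distances among the probed vertices are never needed is a welcome elaboration but not a departure from the paper's route.
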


Recall that dropping the condition on the diameter would give us a lower bound of $\Omega(n^2)$, since orderly hyperstars are hard to learn.
Note, however, that connectivity is not essential for learning to have sub-quadratic cost. %It suffices that the target is a hyperforest consisting of a small number of hypertrees. 
The additional cost in dealing with a hyperforest with $\le c$ hypertrees is $O(cn)$ queries, since $c$ queries will suffice to determine which of the hypertrees a new vertex belongs to; the remainder of the insertion procedure works as described above.

\subsection{Asymptotic Optimality of Our Algorithm}\label{sec:lowerbound}

Both \citet{10} and
\citet{BastideG24} prove tight lower bounds on tree reconstructions by reduction from certain set partition learning problems. It turns out to be reasonably straightforward to generalize these arguments to hypertree reconstruction if we are only concerned with worst case bounds for deterministic algorithms.

We begin by defining, for any fixed $\Delta \ge 2$ and $k \ge 1$ a \emph{base tree} $T_{\Delta}^k$:  a rooted tree of depth $k$, with vertices
$z_{\langle a\rangle  }$, $a \in \{1, \ldots, \Delta \}^\ell$ on level $\ell$, $0 \le \ell \le k$,
and, for every vertex $z_{\langle i_1, \ldots, i_{\ell - 1}, i_{\ell}\rangle  }$ 
on level $\ell$, $1 \le \ell \le k$, an 
edge $\{ z_{\langle i_1, \ldots, i_{\ell - 1}\rangle  }, z_{\langle i_1, \ldots, i_{\ell - 1}, i_{\ell}\rangle  } \}$ joining $z_{\langle i_1, \ldots, i_{\ell - 1}, i_{\ell}\rangle  }$ to its parent $z_{\langle i_1, \ldots, i_{\ell - 1}\rangle  }$ on level $\ell -1$.

We augment $T_{\Delta}^k$ with vertices from the set $V= \{ v_1, \ldots, v_N \}$ to form a class
${\mathcal H}(\Delta, k, N)$ of  $\Delta^{kN}$ orderly hypertrees with $(\Delta^{k+1} -1)/(\Delta -1) + N$ vertices and $(\Delta^{k+1} -1)/(\Delta -1)$ hyperedges. 
%{\red
%[DK] Perhaps replace $n$ by $N$ in this definition, to keep the elements of ${\mathcal H}(\Delta, k, N)$ to have exactly 
%$n = (\Delta^{k+1} -1)/(\Delta -1) + N$ vertices?
%}
${\mathcal H}(\Delta, k, N)$ contains one hypertree for every partition $\mathcal S$ of $V$ into (possibly empty) sets $S_{\langle a\rangle  }$, where $a \in \{1, \ldots, \Delta \}^k$, where the extreme base tree edge 
$\{ z_{\langle i_1, \ldots, i_{k - 1}\rangle  }, z_{\langle i_1, \ldots, i_{k - 1}, i_{k}\rangle  } \}$ is replaced by the hyperedge
$\{ z_{\langle i_1, \ldots, i_{k - 1}\rangle  }, z_{\langle i_1, \ldots, i_{k - 1}, i_{k}\rangle  } \} \cup S_{\langle i_1, \ldots, i_{k - 1}, i_{k}\rangle  }$,
for every $\langle i_1, \ldots, i_{k - 1}, i_{k}\rangle   \in \{1, \ldots, \Delta \}^k$.

To learn members of ${\mathcal H}(\Delta, k, N)$, it is necessary and sufficient to determine the associated partition $\mathcal S$ of $V$, i.e., to determine, for each $v \in V$, the set $S_{\langle a\rangle  }$ to which $v$ belongs.  

%Suppose $H \in {\mathcal H}(\Delta, k, n)$ has associated partition $\mathcal S$. 
The following adversary strategy shows that to determine, for each $v \in V$, the set $S_{\langle a\rangle  }$ to which $v$ belongs, any $\SP$-query strategy $\mathcal A$ must use at least $(\Delta -1) k$ queries involving each vertex $v \in V$.

We say that vertex $v$ is a \emph{known descendant} of vertex $z_{\langle a\rangle  }$ on level $\ell < k$ if $d_H(z_{\langle a\rangle  }, v) = k - \ell$. 
Once it has been established that vertex $v$ is a known descendant of vertex $z_{\langle a\rangle  }$, the adversary can restrict attention to forcing algorithm $\mathcal A$ to distinguish hypergraphs among those that satisfy this constraint. 
Note that to start each vertex $v$ is a known descendant of the root vertex $z_{\langle\rangle  }$ only.

In general, the adversary strategy maintains an invariant property that, for every vertex $v \in V$,\\ 
(i) $v$ is a known descendant of some vertex $z{\langle a\rangle  }$ on some level $\ell < k$ and has participated in $q < \Delta -1$ $\SP$-queries with other known descendants of vertex $z_{\langle a\rangle  }$, and\\
(ii) $v$ has participated in at least $(\Delta -1) \ell $ $\SP$-queries with other vertices.

There are several cases to consider for the adversary in responding to the next $\SP$-query of algorithm $\mathcal A$:
\begin{itemize}
    \item {$\mathbf{d_H(z_{\langle b\rangle  }, z_{\langle c\rangle  })}$?} In this case, the adversary has no choice in its response;
    \item $\mathbf{d_H(v, z_{\langle c\rangle  })}$? In this case, the adversary responds $d_H(v, z_{\langle c\rangle  }) = k- \ell + d_H(z_{\langle a\rangle  }, z_{\langle c\rangle  })$.This says, in effect, that $v$ is \emph{not} a descendent of $z_{\langle c\rangle  }$;
    \item $\mathbf{d_H(v, v')}$? In this case, the adversary responds 
    $d_H(v, v') = 2(k- \ell')$, if the lowest common known ancestor of $v$ and $v'$ is on level $\ell'$.
    In particular, it responds $d_H(v, v') = 2(k- \ell)$, if $v'$ is a known descendant of
    vertex $z_{\langle a\rangle  }$. This says, in effect, that $v$ and $v'$ are \emph{not} descendents of the same child of $z_{\langle a\rangle  }$.

\end{itemize}

In the event that the next $\SP$-query is the $(\Delta -1)$-th
$\SP$-query involving $v$ with other known descendants of vertex $z{\langle a\rangle  }$, at least one of the children of $z{\langle a\rangle  }$ has none of these as a known descendant. The adversary continues, having declared $v$ to be a known descendant of one such child of $z_{\langle a\rangle  }$.

\begin{claim}\label{clm:lowerbound}
    The adversary strategy forces any given algorithm to make at least $N (\Delta -1) k/2 $ $\SP$-queries in order to learn members of ${\mathcal H}(\Delta, k, N)$.
\end{claim}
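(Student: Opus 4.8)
The plan is to establish the claimed lower bound of $N(\Delta-1)k/2$ by carefully accounting for the $\SP$-queries that the adversary strategy (just described) forces, and then observing that each query is counted at most twice. First I would verify that the adversary's responses are self-consistent: that is, that at every stage there exists at least one hypertree in $\mathcal H(\Delta,k,N)$ realizing all the distances returned so far. This amounts to checking that the three response rules never contradict the invariant, and in particular that the invariant (i)--(ii) is genuinely maintained: whenever $v$ completes its $(\Delta-1)$-th query against known descendants of $z_{\langle a\rangle}$, there is indeed an unused child of $z_{\langle a\rangle}$ (since $z_{\langle a\rangle}$ has exactly $\Delta$ children and at most $\Delta-1$ distinct ``query-partner classes'' have been pinned to distinct children), so the adversary can legitimately push $v$ one level deeper while resetting its local counter $q$ to $0$ and incrementing its level contribution by $\Delta-1$.

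The core counting argument then runs as follows. For the algorithm to have learned the target, every vertex $v\in V$ must have been assigned to a leaf-level set $S_{\langle a\rangle}$ with $a\in\{1,\dots,\Delta\}^k$; equivalently, $v$ must be a known descendant of some level-$k$ vertex. By invariant (ii), at the moment $v$ reaches level $k$ it has participated in at least $(\Delta-1)k$ $\SP$-queries. I would argue that $v$ cannot be pushed to level $k$ ``for free'': each of the $k$ increments of $v$'s level requires $\Delta-1$ fresh queries pairing $v$ with other known descendants of its current ancestor, and the adversary only advances $v$ when that quota is met. Hence the total number of (query, endpoint) incidences is at least $N(\Delta-1)k$. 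Since each $\SP$-query has exactly two endpoints (and a query of the form $d_H(z_{\langle b\rangle},z_{\langle c\rangle})$ contributes $0$ to this count, only strengthening the bound), the number of distinct queries is at least $N(\Delta-1)k/2$, as claimed.

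The main obstacle I anticipate is the bookkeeping needed to make ``participated in at least $(\Delta-1)\ell$ $\SP$-queries with other vertices'' precise and monotone under the adversary's moves — specifically, ensuring that queries counted toward advancing $v$ past level $\ell-1$ are disjoint from those counted toward advancing it past level $\ell$, so that the contributions genuinely sum to $(\Delta-1)k$ rather than being double-counted across levels. The cleanest way to handle this is to attach to each vertex $v$ a per-level tally: when $v$ is a known descendant of $z_{\langle a\rangle}$ at level $\ell$, only queries between $v$ and other current known descendants of $z_{\langle a\rangle}$ are credited, and these are by construction a new batch each time $v$ descends (a query between $v$ and $v'$ at the time both sit under $z_{\langle a\rangle}$ is never re-credited under a deeper ancestor, since after descent $v$ and $v'$ may no longer share a known ancestor at the relevant level). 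I would also need to dispatch the degenerate cases $\Delta=2$ (where $\Delta-1=1$ and the bound reads $Nk/2$) and small $N$, but these follow directly from the same incidence count. Combining the per-vertex lower bound of $(\Delta-1)k$ query-incidences, summed over all $N$ vertices and halved, yields the statement of Claim~\ref{clm:lowerbound}.
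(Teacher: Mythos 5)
Your proposal is correct and follows essentially the same route as the paper: maintain the invariant, observe that each vertex of $V$ must accumulate $(\Delta-1)$ query-participations at each of $k$ levels before it can be pinned to a leaf-level set, and divide the resulting $N(\Delta-1)k$ query-incidences by two since each query involves at most two vertices of $V$. Your extra care about the per-level disjointness of the credited queries and the pigeonhole argument for the existence of an unused child simply fills in details the paper leaves as ``straightforward to confirm.''
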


\begin{proof}
    It is straightforward to confirm that the adversary strategy maintains the desired invariant. It follows immediately from the invariant that until every vertex $v \in V$ has been involved with $\Delta -1$ $\SP$-queries after having become a known descendant of some vertex $z_{\langle a\rangle  }$ on level $k-1$, there are at least two hypertrees in ${\mathcal H}(\Delta, k, N)$ that are consistent with the results of all queries to date. Since every $\SP$-query invoves at most two vertices in $V$, the claim follows.
\end{proof}

Provided $N$ is at least $(\Delta^{k+1} -1)/(\Delta -1)$, hypertrees in the class ${\mathcal H}(\Delta, k, N)$ have 
$n = (\Delta^{k+1} -1)/(\Delta -1) + N$ vertices, $m = (\Delta^{k+1} -1)/(\Delta -1)$ edges and $k = \Theta(\log_{\Delta} m)$ levels. Consequently, we have demonstrated the following theorem, showing that the upper bound in Theorem~\ref{thm:offline} is asymptotically tight.

\begin{theorem}
For any fixed $\Delta \ge 3$, $m \ge 1$, and $n \ge 2m$
any deterministic offline algorithm for learning orderly hypertrees, with diameter at least three, degree at most $\Delta$, $n$ vertices and $m$ hyperedges, requires $\Omega(n \Delta \log_{\Delta} m)$ $\SP$-queries in the worst case.
\end{theorem}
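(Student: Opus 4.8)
The plan is to derive the final theorem directly from Claim~\ref{clm:lowerbound} together with the parameter bookkeeping that already appears in the paragraph preceding the theorem statement. First I would observe that everything rests on checking that the family ${\mathcal H}(\Delta,k,N)$ is a legitimate witness: each member is an orderly hypertree (the base tree is a tree, and appending disjoint private-vertex sets $S_{\langle a\rangle}$ to the extreme edges keeps the skeleton a tree, hence orderly by the equivalence Claim), each has diameter at least three provided $k\ge 2$ (so we need $m$ large enough that $k=\Theta(\log_\Delta m)\ge 2$, which is why the theorem fixes $\Delta\ge 3$ and lets the constant absorb small $m$), each has maximum edge degree at most $\Delta$ (an edge $\{z_{\langle i_1,\dots,i_{\ell-1}\rangle},z_{\langle i_1,\dots,i_\ell\rangle}\}$ meets its parent edge and the $\le\Delta$ child edges), exactly $m=(\Delta^{k+1}-1)/(\Delta-1)$ hyperedges, and $n=m+N$ vertices. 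Then I would set $N=n-m$, note that the hypothesis $n\ge 2m$ gives $N\ge m\ge(\Delta^{k+1}-1)/(\Delta-1)$, which is precisely the side condition needed so that the quoted parameter identities hold and so that the partition has ``room'' for the adversary argument.

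Next I would invoke Claim~\ref{clm:lowerbound}: the adversary forces at least $N(\Delta-1)k/2$ $\SP$-queries on any deterministic algorithm learning ${\mathcal H}(\Delta,k,N)$. Substituting $N=n-m\ge n/2$ (from $n\ge 2m$) and $k=\Theta(\log_\Delta m)$ turns this into $\Omega(n\,\Delta\log_\Delta m)$ queries, which is the claimed bound. The only subtlety here is the transition from ``$k=\lceil\log_\Delta\ell\rceil$-style'' discreteness to the $\Theta(\log_\Delta m)$ statement: since $m=(\Delta^{k+1}-1)/(\Delta-1)$ we have $\Delta^k\le m<\Delta^{k+1}$, so $k=\lfloor\log_\Delta m\rfloor$ exactly, and $\log_\Delta m\ge k\ge \log_\Delta m-1$, giving $k=\Theta(\log_\Delta m)$ cleanly for $m\ge\Delta$ (the finitely many smaller $m$ being absorbed into the constant, or handled by noting the bound is vacuous/trivial there).

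The last point to nail down is quantifier matching between the theorem statement and the construction. The theorem fixes $\Delta$, $m$, and $n\ge 2m$ and asserts a worst-case lower bound over orderly hypertrees with \emph{those} parameters; the construction produces a whole class, but every member has the prescribed $m$ and $n$ and degree at most $\Delta$ and diameter at least three, so the worst case over the stated parameter slice is at least the worst case over ${\mathcal H}(\Delta,k,N)$, and Claim~\ref{clm:lowerbound} lower-bounds the latter. One should also confirm that ``learning'' in the sense used in Claim~\ref{clm:lowerbound} (determining the partition $\mathcal S$) coincides with learning in the paper's general sense restricted to this class --- this is exactly the ``necessary and sufficient'' remark made just before the adversary strategy is introduced, so nothing new is needed.

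I expect the main obstacle to be essentially a verification burden rather than a conceptual one: carefully confirming that every hypertree in ${\mathcal H}(\Delta,k,N)$ really is orderly, really has diameter $\ge 3$, and really has maximum edge degree $\le\Delta$ (not $\Delta+1$ off-by-one), and that the adversary's three response cases in Claim~\ref{clm:lowerbound} are mutually consistent and leave a genuine choice of at least two surviving hypertrees --- i.e., that the stated invariant is actually an invariant. If I wanted a cleaner writeup I would state and prove a short helper lemma asserting ``${\mathcal H}(\Delta,k,N)$ is a class of orderly hypertrees with the claimed $n$, $m$, $\Delta$, and diameter'' before invoking the adversary, so that the theorem's proof becomes the two-line substitution $N=n-m$, $k=\Theta(\log_\Delta m)$ into the bound of Claim~\ref{clm:lowerbound}.
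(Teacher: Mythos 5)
Your proposal matches the paper's own argument: the theorem is obtained directly from Claim~\ref{clm:lowerbound} applied to ${\mathcal H}(\Delta,k,N)$, substituting $N=n-m\ge n/2$ and $k=\Theta(\log_\Delta m)$ into the $N(\Delta-1)k/2$ bound. Your verification checklist is in fact slightly more careful than the paper's one-paragraph justification---e.g., both you and the paper gloss over the fact that internal edges of the base tree have edge degree closer to $2\Delta$ than $\Delta$ (parent plus siblings plus children), a constant-factor discrepancy that is absorbed by the $\Omega(\cdot)$ after rescaling the branching factor.
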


\begin{remark}
    As we have already observed, the $\Theta(h^2)$ overhead in our online algorithm is unavoidable. Thus our online algorithm for learning orderly hypertrees is also optimal, up to a constant factor.
\end{remark}

\section{Learning Orderly Hypertrees with Bounded Distance Queries}

Algorithms for learning hypergraph families using $\SP$-queries are based on the assumption that exact distance information, between all vertex pairs, is available from an oracle. 
In some contexts, such as evolutionary tree reconstruction, it may be more reasonable to assume that such information is only available for ``nearby'' vertex pairs. 
To model such situations, we consider \emph{bounded distance queries} that provide accurate distance information only when the distance between specified species is less than some fixed threshold.

\begin{definition}
    A ($\mathrm{dist}_{\le d}$-query)
%($\mathrm{dist}^{\widehat{d}}$-query)
%($\mathrm{dist}^{d}$-query)
for a given pair of vertices, returns their distance, if the distance is at most $d$, and ``$>\!d$'' otherwise.
\end{definition}

It is interesting to note that the orderliness property makes it possible to construct efficient algorithms for learning general connected hypergraphs, not just hypertrees, using 
$\mathrm{dist}_{\le 1}$-queries:

%[[another $D_2$ result, that extends to general hypergraphs:]]
\begin{theorem} \label{thm:dist1alg}
    Let $\mathcal{H}$ be the class of all 
%orderly hypertrees
connected orderly hypergraphs of order $n$ and diameter at least 3, each hyperedge of which contains at least one private vertex. 
    %The worst-case number of $\mathrm{dist}_{\le 1}$-queries needed to %adaptively 
    $O(mn)$ $\mathrm{dist}_{\le 1}$-queries suffice to
    learn any target hypergraph $H\in \mathcal{H}$, 
    %is in $O(mn)$, 
    where $m$ is the number of hyperedges in $H$.% and not known to the learner in advance.
\end{theorem}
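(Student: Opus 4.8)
The plan is to reduce to adjacency queries in a graph and then exploit the structure of $G(H)$, the $2$-section graph of $H$ (vertex set $V$; $\{u,w\}$ an edge iff $u$ and $w$ lie in a common hyperedge). A $\mathrm{dist}_{\le 1}$-query on distinct $u,w$ returns $1$ iff $\{u,w\}$ is an edge of $G(H)$ and ``$>\!1$'' otherwise, so each query is exactly an adjacency query in $G(H)$, and every hyperedge of $H$ is a clique of $G(H)$. I would first record a few structural consequences of orderliness and the private-vertex hypothesis, writing $N[v]$ for the closed neighbourhood of $v$ in $G(H)$ and $C_v=\bigcap_{e\ni v}e$. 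From Proposition~\ref{prop:orderlyPartition} and orderliness: all hyperedges containing $v$ pairwise intersect in the single set $C_v$, which is the cell of $v$ in the partition of Proposition~\ref{prop:orderlyPartition}, and $v$ is private iff it lies in exactly one hyperedge. Adding the hypothesis that every hyperedge has a private vertex yields the three facts the algorithm runs on. (i)~If $p$ is private to $e$ then $N[p]=e$, so the hyperedges of $H$ are exactly the sets $N[p]$ with $p$ private. (ii)~$v$ is private iff $N[v]$ is a clique (otherwise the private vertices of two hyperedges through $v$ are non-adjacent members of $N[v]$). (iii)~Every hyperedge $f$ with $f\subseteq N[v]$ contains $v$: indeed $N[p_f]=f\subseteq N[v]$ forces the private vertex $p_f$ of $f$ to share a hyperedge with $v$, and that hyperedge can only be $f$. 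Hence the hyperedges contained in $N[v]$ are precisely those through $v$, and from $N[v]$ together with those hyperedges one recovers $C_v$ and can peel off the ``petals'' $e\setminus C_v$.

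Next I would present the algorithm as an incremental exploration that \emph{expands} vertices --- expanding $v$ costs $n-1$ queries and produces $N[v]$ --- while maintaining a growing collection of fully recovered hyperedges; correctness then follows at once from (i)--(iii), since once a private vertex of every hyperedge has been expanded we know every hyperedge as a vertex set, hence $H$. The exploration starts by expanding an arbitrary $v_1$; the diameter-at-least-three hypothesis guarantees $N[v_1]\ne V$, giving vertices at distance $>1$ to drive the search. It then proceeds in the spirit of a breadth-first search on the line graph $L(H)$ (connected because $H$ is): a recovered hyperedge $e$ is \emph{processed} by going through its cells $C\subseteq e$ that are shared with other hyperedges and, for each, expanding a representative to obtain $N[u]=\bigcup_{e'\supseteq C}e'$, then using (ii)--(iii) to locate inside $N[u]$ the private vertices of the remaining hyperedges through $C$ (they are exactly the vertices of $N[u]$ whose own neighbourhood is a clique), expanding those to recover the hyperedges, and queueing the new hyperedges. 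When a cell carries only two hyperedges, the new one is obtained directly as $C\cup(N[u]\setminus e)$ with no further expansion; when a probed vertex turns out to be non-private, its expansion still exposes a hyperedge that branches away from $N[u]$.

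The main obstacle, and where essentially all the work lies, is the query bound: one must organise the exploration so that only $O(m)$ vertices are ever expanded, which gives the claimed $O(mn)$ total. Expanding one private vertex per hyperedge accounts for exactly $m$ expansions; the remaining expansions --- the cell representatives and the ``probing'' expansions of vertices that turn out to be non-private or otherwise redundant --- must be bounded by a charging argument that assigns each of them to a hyperedge (or a cell) being reconstructed for the first time and shows that no hyperedge or cell is charged more than a constant number of times. The delicate point is that the number of cells can be super-linear in $m$, so one cannot afford a separate expansion per cell; instead one must argue, using orderliness to control how petals and cells nest inside a ball $N[u]$ and using (iii) to rule out stray hyperedges, that every probing expansion makes irreversible progress (it reveals a new hyperedge, or a strictly larger portion of a partially known one), so that probing expansions can be amortised against hyperedge discoveries. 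The bookkeeping for hyperedges with several private vertices, for the $O(1)$ bootstrap expansions at $v_1$, and for a bounded number of connected components, is routine once this charging scheme is in place.
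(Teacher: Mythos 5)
Your reduction to adjacency queries in the $2$-section graph and your structural facts (i)--(iii) are all correct, and the overall plan (recover each hyperedge as $N[p]$ for a private vertex $p$, so that $O(m)$ ``expansions'' of cost $n-1$ each would give $O(mn)$) is sound in outline. But the proof has a genuine gap exactly where you flag it: the claim that only $O(m)$ vertices are ever expanded is never established. You defer it to ``a charging argument'' and correctly observe the obstruction yourself --- the number of distinct cells can be super-linear in $m$ --- but you do not resolve it. The obstruction is real: the theorem is stated for orderly \emph{hypergraphs}, not hypertrees, and an orderly hypergraph can have $\Theta(m^2)$ pairwise-disjoint singleton intersections $e_i\cap e_j$ (this is consistent with connectivity, diameter $\ge 3$, and every hyperedge having a private vertex). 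Your exploration expands one representative per cell per recovered hyperedge, plus further expansions to test which vertices of $N[u]$ have clique neighbourhoods, so without the missing amortization it yields $O((m+\#\mathrm{cells})\cdot n)$, which is not $O(mn)$. Since this accounting is, as you say, ``where essentially all the work lies,'' the proof is incomplete.

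The paper closes this hole with a different decomposition that sidesteps per-cell accounting entirely. It greedily picks vertices $v_0,v_1,\ldots$, each chosen outside all previously queried balls, and expands each to get $V_i=\{v': d_H(v_i,v')=1\}$. By orderliness (Proposition~\ref{prop:orderlyPartition}) all hyperedges through $v_i$ share the single cell containing $v_i$, so $H[V_i]$ is a hyperstar; and because every hyperedge has a private vertex, each hyperedge lies in exactly one such hyperstar, so there are at most $m$ balls --- this is the counting step your argument lacks. Each hyperstar is then learned by repeatedly expanding (within $V_i$ only, at cost $n_i$ per expansion) a vertex not covered by the edges found so far, which takes at most $m_i$ expansions, giving $\sum_i m_i n_i = O(mn)$ overall; a single boundary vertex $w_i\in V_i\cap V_j$ is used to bootstrap the first petal. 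If you want to salvage your line-graph BFS, you would need to prove the amortization you describe, or restructure the exploration so that expansions are charged to hyperedges rather than to cells, as the hyperstar decomposition does.
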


\begin{proof} The claim is witnessed by the following algorithm for learning any $H\in \mathcal{H}$: 

(1) Initialize $i=0$, $V^*=V$ and repeat the following until $V^*=\emptyset$:
\begin{itemize}
    \item[(a)] Select a vertex $v_i\in V^*$ and ask a total of $n-1$ $\mathrm{dist}_{\le 1}$-queries of the form $\{v_i,v'\}$ -- one for each vertex $v'\in V$ with $v'\ne v_i$. 
    \item[(b)] Let $V_i\subseteq V$  be the set of vertices $v'$ for which $d_H(v_i, v')=1$.
    %such query is answered $1$. 
    (Note that $V_i$ is the union of all edges in $H$ containing $v_i$.) 
    %\item[(c)] 
    Set $V^*:= V^*\setminus V_i$ and $i:=i+1$.
\end{itemize}
Since $H$ is orderly, the sub-hypergraph induced by $V_i$ is an orderly hyperstar $S_i$ (possibly just a single edge), and hyperedges of $H$ belong to at most one such hyperstar . For each edge $e$ in $H$, since $e$ has at least one private vertex, there must exist an $i$ such that $e\subseteq V_i$. Thus every hyperedge of $H$ belongs to exactly one hyperstar $S_i$.. 

(2) Since $H$ is connected with diameter at least 3, each set $V_i$ intersects with at least one set $V_j$, $j\ne i$. Thus, for each set $V_i$:
\begin{itemize}
    \item[(a)] Select a vertex $w_{i}\in V_i$ that belongs to some set $V_j$, $j\ne i$.  For each vertex  $w\in V_i$, ask one $\mathrm{dist}_{\le 1}$-queries of the form $\{w_{i},w\}$.

  \item[(b)] If all queries are answered $1$, then $V_i$ is a single edge. Otherwise, the vertices in $V_i$ form a hyperstar $H_i$ within $H$; moreover, the vertex $w_i$ has maximum eccentricity in $H_i$. (Note that the intersections of $V_i$ with any $V_j$ are known at this point.) 

\end{itemize}

(3) The hyperedges in each $H_i$ can be learned with at most $O(n_i m_i)$ $\mathrm{dist}_{\le 1}$ queries, where $n_i=|V_i|$ is the number of vertices in $H_i$, and $m_i$ is the number of edges in $H_i$. To do so, note that the edge $e$ in $H_i$ that contains $w_i$ is already determined by the queries in (2a). 
   The other edges of $H_i$ are learned by picking any vertex $u\in V_i\backslash e$ and asking $\mathrm{dist}_{\le 1}$-queries of the form $\{u,w\}$ for each vertex $w\in V_i$. Continuing this process, each time picking a vertex $u\in V_i$ that does not belong to the previously learned edges of $H_i$, all edges of $H_i$ are learned.  
   The total number of $\mathrm{dist}_{\le 1}$-queries in this step is in $O(\sum_{i} m_i n_i)$ which can be bounded by $O(n\sum_i m_i)=O(m n)$.\\
   %$$\sum_{i} m_i n_i\le n \sum_{i} m_i=nm\,.$$% where $k(\le m)$ is the number of all clusters. 
Hence the total number of $\mathrm{dist}_{\le 1}$-queries consumed in order to learn $H$ is in  $O(mn)$.
\end{proof} 

\begin{remark} 
    As  noted in Remark~\ref{rem:SPhard}, dropping either (i) the condition on diameter at least 3, or (ii) the orderliness condition, would admit hypertrees that require $\Omega(n^2)$ $\SP$-queries. 
    If the private vertex condition alone is dropped it is straightforward to see that 
    $\Omega(n^2)$ $\mathrm{dist}_{\le 1}$-queries could be required, as witnessed by the class of hyperpaths of diameter  $\ge 3$: 
\end{remark}

\begin{lemma}\label{lem:paths}
    Let $\mathcal{H}$ be the class of all hyperpaths (which are, in particular, orderly hypertrees) of diameter at least three. Learning a target hypergraph in $\mathcal{H}$ requires $\Omega(n^2)$ $\mathrm{dist}_{\le 1}$-queries
     in the worst case. By contrast, the worst-case number of $\SP$-queries  required to %adaptively 
learn any target hypergraph in $\mathcal{H}$ is in $\Theta(n)$. 
\end{lemma}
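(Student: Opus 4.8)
The plan is to treat the two bounds separately: the $\SP$-bound is the easy direction, and the $\mathrm{dist}_{\le 1}$-bound is where the real work lies. For the $\SP$-side I would first exploit that a hyperpath is ``essentially linear''. Since its skeleton $S(H)$ is a path, $V$ decomposes into a linearly ordered sequence of nonempty parts $P_0,\dots,P_m$ (the colored nodes of $S(H)$, with $e_i$ consisting of parts $P_{i-1},P_i$), and for $u\in P_a$, $v\in P_b$ one has $d_H(u,v)=\max(|a-b|,1)$; moreover $m\ge 3$ because the diameter is at least three. An $O(n)$ algorithm then is: pick any vertex $v$, spend $n-1$ $\SP$-queries to learn $d_H(v,\cdot)$, and let $u$ be any vertex at maximum distance from $v$. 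The part farthest from $v$'s part is always an extreme one ($P_0$ or $P_m$), so $u$ lies in an extreme part, and a further $n-1$ $\SP$-queries giving $d_H(u,\cdot)$ read off, for every $w$, the value $\operatorname{idx}(w)$ or $m-\operatorname{idx}(w)$; grouping vertices by this value and ordering the groups recovers the whole partition and its linear order (up to the irrelevant reversal), hence $H$. This uses $2(n-1)$ queries. The matching $\Omega(n)$ lower bound is immediate: a vertex occurring in no query is unconstrained, so at least $n/2$ queries are needed. Hence $\mathcal H$ has $\SP$-query complexity $\Theta(n)$.

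For the $\mathrm{dist}_{\le 1}$ lower bound I would use that $\mathcal H$ is large enough to contain genuinely hard instances, namely ordinary path graphs. Every path graph $P=w_1\!-\!w_2\!-\!\cdots\!-\!w_n$ on $V$ is an orderly hypertree whose skeleton is a path, so (for $n\ge 4$, where its diameter $n-1\ge 3$) it lies in $\mathcal H$. For such a $P$ every pair is at distance $\ge 2$ unless it is an edge of $P$, so a $\mathrm{dist}_{\le 1}$-query on $\{u,v\}$ answers exactly ``is $\{u,v\}$ an edge of $P$?''. It therefore suffices to show that learning an unknown Hamiltonian path on $n$ labelled vertices with edge-membership queries requires $\Omega(n^2)$ queries in the worst case. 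I would prove this with the natural adversary: answer ``non-edge'' to each queried pair, maintaining the invariant that the graph $R$ spanned by the not-yet-refuted pairs, together with the set $F$ of pairs already answered ``edge'', admits at least two Hamiltonian paths $H'$ with $F\subseteq H'\subseteq R$; only when deleting the queried pair from $R$ would destroy this property is the adversary forced to answer ``edge''. An algorithm has identified $P$ only once exactly one such $H'$ remains, i.e.\ $R$ has a unique Hamiltonian path through the (at most $n-1$) forced edges.

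The crux, and the step I expect to be the main obstacle, is bounding how long this adversary survives. The cleanest route I see is a potential argument on the set $\mathcal P$ of still-consistent Hamiltonian paths, initially the $n!/2$ paths of $K_n$: a ``non-edge'' answer on $\{u,v\}$ eliminates exactly the paths of $\mathcal P$ through $\{u,v\}$, and in $K_n$ every fixed pair lies on only a $2/n$ fraction of all Hamiltonian paths, so as long as $\mathcal P$ stays ``spread out'' (no pair lies on more than, say, a $1/3$ fraction of $\mathcal P$ — a property one must show the adversary can preserve, e.g.\ because $\mathcal P$ remains the set of Hamiltonian paths of a graph $K_n\setminus D$ with $D$ not yet concentrated) the adversary can always answer ``non-edge'' while losing at most an $O(1/n)$ fraction of $\mathcal P$, forcing $\Omega\bigl(n\log(n!)\bigr)=\Omega(n^2\log n)$ queries, comfortably $\Omega(n^2)$. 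A purely structural alternative is to argue that a graph possessing a unique Hamiltonian path and containing a prescribed linear forest $F$ with $|F|\le n-1$ has only $O(n)$ edges — intuitively, such a graph cannot contain two ``independent'' chords without admitting a second Hamiltonian path — so that $R$ must have $O(n)$ edges and hence $\binom{n}{2}-O(n)$ pairs were refuted.

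Making either version fully rigorous is the technical heart of the proof. The delicate point in the potential argument is controlling the algorithm's ability to ``concentrate'' $\mathcal P$ (e.g.\ by driving down the remaining degree of one vertex), and in the structural argument it is controlling the at most $n-1$ ``edge'' answers, which an algorithm would like to force cheaply but which the adversary can make expensive (each forced ``edge'' answer requires the queried pair to lie on almost all remaining candidates, a situation the adversary only enters after $\Omega(n)$ preparatory ``non-edge'' answers). Everything else — the membership of path graphs in $\mathcal H$, the equivalence of $\mathrm{dist}_{\le 1}$-queries with edge queries on them, and the linear-time $\SP$ reconstruction — is routine.
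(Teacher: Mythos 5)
Your $\SP$-query part is essentially fine (the paper simply cites the $\Theta(n)$ bound from Fallat et al., whereas you reconstruct the linear order of the skeleton parts from two rounds of distance queries; the small ambiguity between the two parts nearest the extreme vertex $u$ is resolved by the distances from $v$, which you use implicitly). The problem is the $\mathrm{dist}_{\le 1}$ lower bound, which is the substance of the lemma and which your proposal does not actually establish. You reduce it to the statement that learning a hidden Hamiltonian path on $n$ labelled vertices with edge-membership queries requires $\Omega(n^2)$ queries, and then offer two sketches, both of which you yourself flag as incomplete at the decisive step. Concretely: in the potential argument, the invariant that no pair ever lies on more than a constant fraction of the surviving candidate set is exactly what has to be proved and does not hold for free (an algorithm that queries one vertex against everyone else concentrates the candidate set on a few pairs, so the invariant as stated can fail and the argument would have to be amortized); in the structural argument, the claim that a graph with a unique Hamiltonian path through a prescribed linear forest has $O(n)$ edges is asserted only ``intuitively'' and is not obviously true (a path plus an internal chord $\{v_i,v_j\}$ with $1<i$ and $i+1<j<n$ still has a unique Hamiltonian path, so such graphs need not be bare paths, and bounding their edge count is a genuine extremal problem). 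So the heart of the proof is missing.

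The paper avoids all of this with a much simpler hard family: a hyperpath with three edges $e_1,e_2,e_3$, where $|e_1|=|e_3|=n/2$ and $e_2$ is a hidden pair consisting of one vertex of $e_1$ and one of $e_3$ (so $e_2$ has no private vertices). Even granting the learner the partition into $e_1$ and $e_3$ for free, every $\mathrm{dist}_{\le 1}$-query inside $e_1$ or inside $e_3$ returns $1$ and is uninformative, and every cross query returns $1$ only on the single hidden pair; an adversary answers ``$>\!1$'' until one candidate remains, forcing $(n/2)^2-1$ queries. Each informative query eliminates exactly one of $\Theta(n^2)$ candidates, so the counting is trivial. If you want to keep your Hamiltonian-path reduction you must supply the missing $\Omega(n^2)$ lower bound in full; otherwise, switching to a ``hidden pair'' instance of this kind is the intended, and far shorter, route.
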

\begin{proof}
     The $\Theta(n)$ upper bound was proven by \citet{FMMZ}). 
The quadratic lower bound on the $\mathrm{dist}_{\le 1}$-query complexity can be derived as follows. For any $n\ge 4$, consider a hyperpath $P_n$ (Figure \ref{fig:5path}) of length three with two edges $e_1$ and $e_3$ of cardinality $n/2$ and one edge $e_2$ of cardinality $2$ with no private vertices. 

Even if the vertices in $e_1$ and $e_3$ are identified as such, identifying the vertices in $e_2$ forces any algorithm to determine the unique pair from $e_1 \times e_3$ that has distance 1. Clearly, an adversary can force a learner to query all pairs in $e_1 \times e_3$ before the correct one is discovered.
\end{proof}

\begin{figure}[t]
  \centering
  \includegraphics[width=0.5\textwidth, scale=0.1]{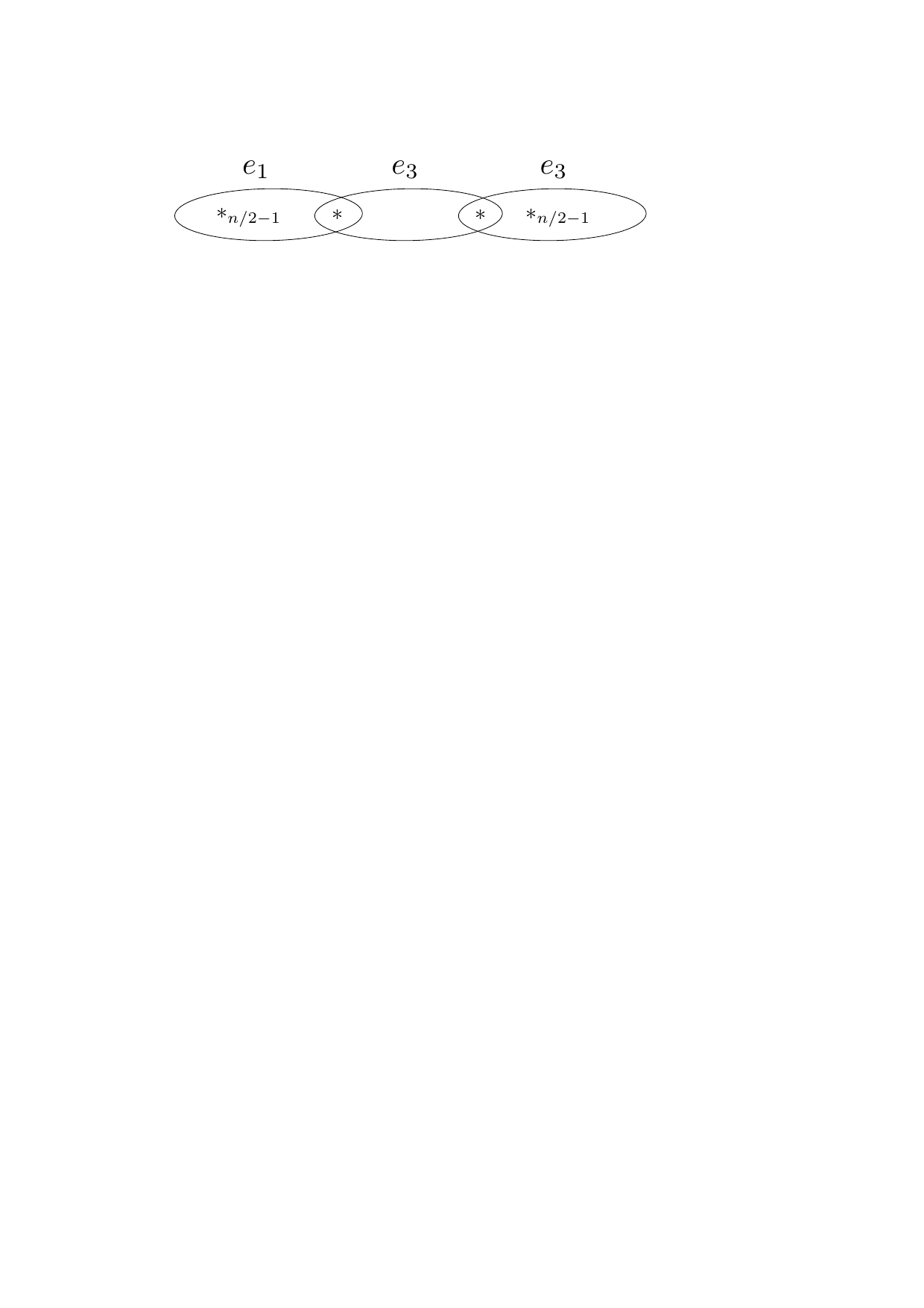}
 \caption{Hyperpath $P_n$ that requires $\Theta(n^2)$ $\mathrm{dist}_{\le 1}$-queries.}
  \label{fig:5path}
\end{figure}

As we have seen, the assumption that hypertrees in a given class have hyperedges all of which have private vertices is essential for the existence of a learning algorithm of sub-quadratic complexity using $\mathrm{dist}_{\le 1}$-queries.
However, if the bound on distance queries is increased, even slightly, it is possible to get the same $O(n m)$ upper bound on query complexity even if the private vertex assumption is dropped. 

\begin{theorem} \label{thm:dist2alg}
    Let $\mathcal{H}$ be the class of all orderly hypertrees %\textcolor{red}{connected orderly hypergraphs?} 
    of order $n$ and diameter at least 3.
    $O(mn)$ $\mathrm{dist}_{\le 2}$-queries suffice to 
    learn any target hypergraph $H\in \mathcal{H}$,
    where $m$ is the number of hyperedges in $H$.
    
\end{theorem}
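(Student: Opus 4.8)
The plan is to build on the algorithm of Theorem~\ref{thm:dist1alg}, using the extra power of $\mathrm{dist}_{\le 2}$-queries -- which let the learner ``see one hyperedge further'' -- to recover the hyperedges that carry no private vertex. Write $N_1[v]=\{w\in V: d_H(v,w)\le 1\}$ and $N_2(v)=\{w\in V: d_H(v,w)=2\}$. \emph{Phase 1 (covering).} Repeatedly select a landmark vertex $v$ not lying within distance $1$ of any previously selected landmark, and for each landmark ask all $n-1$ $\mathrm{dist}_{\le 2}$-queries $\{v,w\}$, $w\in V$, stopping once the sets $N_1[v]$ of the chosen landmarks cover $V$. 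No hyperedge contains two landmarks (they lie pairwise at distance $\ge 2$) while every landmark lies in some hyperedge, so there are at most $m$ landmarks and Phase 1 costs $O(mn)$ queries; afterwards, for each landmark $v$ we know, as subsets of $V$, both $N_1[v]=\bigcup\{e:v\in e\}$ and the shell $N_2(v)$.

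\emph{Phase 2 (hyperedges through a landmark).} Fix a landmark $v$ with hyperedges $e_1,\dots,e_{s_v}$ through it. By orderliness these pairwise intersect and, by the Helly property (Remark~\ref{rem:hypertree}), all their pairwise intersections coincide with a common set $C_v=\bigcap_j e_j\ni v$; moreover any hyperedge of $H$ that meets some $e_j$ but avoids $v$ meets exactly one $e_j$, with footprint inside that $e_j$ -- otherwise orderliness (applied to $C_v$) would force $v$ into it. Hence for any $u\in e_i\setminus C_v$ one has $\{w\in N_1[v]: d_H(u,w)=1\}=e_i$, so $e_i$ is read off with $|N_1[v]|$ queries. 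To obtain a first probe $u\notin C_v$ we query a shell vertex $z\in N_2(v)$ against $N_1[v]$: the set $\{w\in N_1[v]: d_H(z,w)=1\}$ is nonempty and disjoint from $C_v$, and any of its members works; once one $e_i$ is found, $C_v\subseteq e_i$, so every later probe can simply be drawn from $N_1[v]$ minus the hyperedges found so far. This recovers all $e_1,\dots,e_{s_v}$ at cost $O(s_v n)$ for landmark $v$; since each hyperedge contains at most one landmark, $\sum_v s_v\le m$, so Phase 2 also costs $O(mn)$. Every hyperedge with a private vertex is through some landmark (a private vertex can be covered in Phase 1 only by a landmark lying in its unique hyperedge), so after Phase 2 all hyperedges are known except possibly some that have no private vertex and contain no landmark.

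\emph{Phase 3 (the remaining hyperedges).} Recall $S(H)$ is a tree since $H$ is an orderly hypertree (Proposition~\ref{prop:orderlyPartition}); a not-yet-recovered hyperedge $e$ has no private vertex, so it is an internal black node of $S(H)$ whose colored neighbours are all intersection nodes $r_1,\dots,r_t$ ($t\ge 2$) with $r_j=e\cap f_j$ for a hyperedge $f_j$. Using that $S(H)$ is a tree, orderliness, and the Helly property one shows: the $f_j$ are pairwise disjoint; a vertex of $r_j$ and a vertex of $r_k$ are at distance exactly $1$ in $H$, while vertices outside $\bigcup_k r_k$ (inside the $f_j$'s) are at distance $\ge 3$; hence the distance-$1$ relation between two \emph{known} neighbours $f_i,f_j$ of $e$ is precisely $r_i\times r_j$; and any two chunks adjacent to a common black node determine that black node uniquely. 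The plan is then to sweep through the remaining hyperedges by a traversal of $S(H)$ rooted at the already-recovered hyperedges: at each step, use $\mathrm{dist}_{\le 2}$-queries (between known neighbours, and against the shells computed in Phase 1) to locate the chunks adjacent to the hyperedge being processed, certify it via the uniqueness statement, and determine its full vertex set with $O(n)$ further queries, after which it becomes available for the next step. Charging $O(n)$ to each of the $\le m$ hyperedges so recovered gives $O(mn)$ for Phase 3, and summing the three phases yields the theorem.

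I expect Phase 3 to be the main obstacle, in two respects. First, beyond proving the structural lemmas above, one must show that the traversal never stalls: the awkward configuration is a whole subtree of remaining hyperedges reachable from the recovered part only through a single chunk, which has to be unravelled ``inward'' using the bounded-distance information and the Phase 1 shells. Second, for the query bound it is essential that locating a chunk $r_i=e\cap f_i$ from a known neighbour $f_i$ does not cost a full sweep of $f_i$; keeping every hyperedge's reconstruction to $O(n)$ queries, so that the total stays $O(mn)$, is the crux. Phases 1 and 2, by contrast, are comparatively routine refinements of the algorithm behind Theorem~\ref{thm:dist1alg}.
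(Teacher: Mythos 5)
Your Phases 1 and 2 follow the same landmark/hyperstar decomposition as the paper (cover $V$ with landmarks, then learn the hyperstar around each landmark by repeatedly probing a vertex outside the edges found so far), but the proposal has a genuine gap exactly where you flag one: Phase 3 is only a plan, and the two points you yourself identify as the crux---that the skeleton traversal never stalls, and that each remaining hyperedge can be certified and reconstructed with only $O(n)$ bounded-distance queries---are not proved. The paper avoids needing a Phase 3 at all by choosing its landmarks differently. Rather than taking an arbitrary uncovered vertex at distance $\ge 2$ from all previous landmarks, it maintains a frontier $V^+$ of not-yet-covered vertices at distance \emph{exactly} $2$ from earlier landmarks and always draws the next landmark from $V^+$. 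This BFS-like rule makes each new hyperstar share a vertex with an earlier one, and together with connectivity and acyclicity of the hypertree it forces \emph{every} hyperedge---including those with no private vertex---to contain a landmark, i.e., to belong to exactly one hyperstar $S_i$. With your selection rule this already fails on the hyperpath of Figure~\ref{fig:5path}: if the two landmarks are private vertices of $e_1$ and $e_3$, their $N_1$-balls cover $V$ and are at distance $3$, yet the middle edge $e_2$ contains no landmark and is a subset of neither ball, which is precisely the configuration that sends you into Phase 3.

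So the missing idea is the frontier-based landmark selection. If you adopt it, Phase 3 disappears and your Phase 2 (reading off each edge $e_i$ through a landmark $v$ as $\{w\in N_1[v]: d_H(u,w)\le 1\}$ for a probe $u\in e_i\setminus C_v$, seeded from the distance-$2$ shell) becomes essentially the paper's step (2). If you keep arbitrary landmarks, you must actually carry out Phase 3, and the stalling issue is real: a chain of private-vertex-free edges joining two recovered stars can be long, and nothing in the sketch shows how to peel it off link by link within the stated query budget. As written, the argument is incomplete.
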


%{\red
%Again, perhaps it would be better to drop "worst case" here and state things more positively: $O(mn)$ $\mathrm{dist}_{\le 2}$-queries suffice to learn..."
%}

\begin{proof}
    The basic idea is very similar to the $\mathrm{dist}_{\le 1}$-query strategy described in the proof of Theorem~\ref{thm:dist1alg}:\\
(1) Initialize $i=0$, $V^* = V$, and  $V^+=\{ v \}$, for some arbitrary vertex $v \in V$, and repeat the following until $V^*=\emptyset$:
\begin{itemize}
    \item[(a)] Select a vertex $v_i\in V^+$ and ask a total of $n-1$ $\mathrm{dist}_{\le 2}$-queries of the form $\{v_i,v'\}$ -- one for each vertex $v'\in V$ with $v'\ne v_i$. 
    \item[(b)] Let $V^1_i$  be the set of vertices $v' \in V$ satisfying $d_H(v_i, v')= 1$.
    %for which such query is answered $1$. 
    (Note that $V_i$ is the union of all edges in $H$ containing $v_i$.) 
    %\item[(c)] 
    Set $V^*:= V^*\setminus V_i$ 
    \item[(c)]
    Let $V^2_i$  be the set of vertices $v' \in V^*$ satisfying $d_H(v_i, v')= 2$.
    Set $V^+ = (V^+ \setminus \{ v_i \}) \cup V^2_i$
    and $i:=i+1$.
\end{itemize}
Since $H$ is orderly, the sub-hypergraph induced by $V_i$ is an orderly hyperstar $S_i$ (possibly just a single edge), so hyperedges of $H$ belong to at most one such hyperstar. 
Furthermore, since $H$ is connected, every vertex $v$ belongs to at least one such hyperstar $S_i$, and, by construction, each hyperstar $S_i$, $i > 1$, contains at least one vertex $v^j_i$ belonging to some hyperstar $S_j$, $j < i$. So, 
there is a path from $v$ to $v_1$ using edges in stars in $\{ S_j \;|\; j\le i \}$.
Since $H$ is acyclic, it follows that every edge of $H$ belongs to exactly one hyperstar $S_i$.

(2) We can now  learn $S_i$ with at most $O(n_i m_i)$ $\mathrm{dist}_{\le 1}$ queries, where $n_i=|V_i|$ is the number of vertices in $S_i$, and $m_i$ is the number of edges in $S_i$. To do so, note that the edge $e$ in $S_i$ that contains $v^j_i$ is
just $\{ v \in V_i \;|\; d_H(v^j_i, v) =1 \}$.  
So $e$ can be determined using $|V_i|$ $\mathrm{dist}_{\le 2}$-queries.

%already determined by the queries in (2a). 
   The other edges of $S_i$ are learned by picking any vertex $u\in V_i\backslash e$ and asking $\mathrm{dist}_{\le 2}$-queries of the form $\{u,w\}$ for each vertex $w\in V_i$. Continuing this process, each time picking a vertex $u\in V_i$ that does not belong to the previously learned edges of $S_i$, all edges of $S_i$ are learned.  
   The total number of $\mathrm{dist}_{\le 2}$-queries in this step is in $O(\sum_{i} m_i n_i)$ which can be bounded by $O(n\sum_i m_i)=O(m n)$.
   %$$\sum_{i} m_i n_i\le n \sum_{i} m_i=nm\,.$$% where $k(\le m)$ is the number of all clusters. 

Hence the total number of $\mathrm{dist}_{\le 2}$-queries consumed in order to learn $H$ is in  $O(mn)$.    
\end{proof}

%{\red
%(c) there is an $\Omega(n m)$ lower bound, using %$\mathrm{dist}_{\le d}$-queries, for any fixed $d \ge 1$.\\}

As it happens the upper bound established in Theorem~\ref{thm:dist1alg} is tight, even for algorithms that use
$\mathrm{dist}_{\le d}$-queries, for any fixed $d \ge 1$ (and even when restricting the hypothesis class to hypertrees).
To establish this lower bound consider the following modification of the adversary argument from 
Section~\ref{sec:lowerbound}.

Begin with the binary base tree $T^k_2$. 
First, add to each of its edges 
$\{ z_{\langle i_1, \ldots i_{\ell-1}\rangle  }, 
z_{\langle i_1, \ldots i_{\ell-1}, i_{\ell}\rangle  } \}$,
$1 \le \ell \le k$,
a new private vertex $z^p_{\langle i_1, \ldots i_{\ell-1}, i_{\ell}\rangle  }$.
Then attach to each of its $2^k$ leaves $z_{\langle a\rangle  }$ a chain of $d$ hyperedges (each with a private vertex):
$\{ z_{\langle a\rangle  }, z^p_{\langle a\rangle  , 1}, z_{\langle a\rangle  , 1} \}, 
\{ z_{\langle a\rangle  , 1}, z^p_{\langle a\rangle  , 2}, z_{\langle a\rangle  , 2} \}, \ldots\\
\{ z_{\langle a\rangle  , d-1}, z^p_{\langle a\rangle  , d}, z_{\langle a\rangle  , d} \}$. 
As in the earlier proof we consider all $2^{kN}$ possible partitions $\mathcal S$ of the vertex set $V= \{ v_1, \ldots, v_N \}$ into $2^k$ (possibly empty) sets; $S_{\langle a\rangle  }$, $a \in \{ 1, 2 \}^N$. 
The class of orderly hypertrees ${\mathcal H}(2, k, N, d)$
 contains one hypertree for every such partition $\mathcal S$, where the extreme hyperedge 
$\{ z_{\langle a\rangle  , d-1}, z^p_{\langle a\rangle  , d}, z_{\langle a\rangle  , d} \}$
%$\{ z_{\langle i_1, \ldots, i_{k - 1}\rangle  , d-1}, 
%z^p_{\langle i_1, \ldots, i_{k - 1}, i_{k}\rangle  , d},
%z_{\langle i_1, \ldots, i_{k - 1}, i_{k}\rangle  , d}\}$ 
is replaced by the hyperedge
%$\{ z_{\langle i_1, \ldots, i_{k - 1}\rangle  , d-1}, 
%z^p_{\langle i_1, \ldots, i_{k - 1}, i_{k}\rangle  , d},
%z_{\langle i_1, \ldots, i_{k - 1}, i_{k}\rangle  , d}\}
$\{ z_{\langle a\rangle  , d-1}, z^p_{\langle a\rangle  , d}, z_{\langle a\rangle  , d} \}
\cup S_{\langle a\rangle  }$,
for every
$a \in \{1, 2 \}^k$.

It is easy to see that vertex $v \in V$ is confirmed to belong to $S_{\langle a \rangle}$ if and only if some $\mathrm{dist}_{\le d}$-query responds $d_H(v, z_{\langle a \rangle, j})= d-j+1 $, for some $1 \le j \le d$. An adversary can clearly delay such a confirmation until the last possible choice for $\langle a \rangle$, independently of the queries involving all other elements of $V$. It follows that in the worst case 
at least $2^k$ $\mathrm{dist}_{\le d}$-queries, involving $v$ alone among vertices in $V$, must be made in order to fix the partition $\mathcal S$, and uniquely determine a hypertree in
${\mathcal H}(2, k, N, d)$.
To summarize:

\begin{theorem}
For any fixed $d \ge 1$, $m \ge 1$, and $n \ge 2m$,
any deterministic offline algorithm for learning orderly hypertrees, with diameter at least three, $n$ vertices, and $m$ hyperedges, each of which contains at least one private vertex, requires $\Omega(n m)$ $\mathrm{dist}_{\le d}$-queries in the worst case.
\end{theorem}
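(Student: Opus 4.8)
The plan is to establish the matching lower bound for the $O(mn)$ upper bound of Theorem~\ref{thm:dist1alg} by exhibiting, for every fixed $d\ge 1$, a family of orderly hypertrees on which any deterministic offline algorithm using $\mathrm{dist}_{\le d}$-queries must spend $\Omega(nm)$ queries. The construction is the one already sketched in the paragraph preceding the statement: start from the binary base tree $T_2^k$, decorate every base-tree edge with a fresh private vertex (so all base hyperedges have private vertices and the structure stays orderly), append to each of the $2^k$ leaves a chain of $d$ hyperedges each carrying a private vertex, and then define the class ${\mathcal H}(2,k,N,d)$ by placing the $N$ "free" vertices of $V=\{v_1,\dots,v_N\}$ into the terminal hyperedges according to an arbitrary partition $\mathcal S$ of $V$ into $2^k$ parts $S_{\langle a\rangle}$.

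First I would verify the structural claims: that every member of ${\mathcal H}(2,k,N,d)$ is an orderly hypertree (each pairwise edge intersection is a single skeleton node because the underlying graph $T_2^k$ plus the chains is a tree and private vertices never create new intersections), that it has diameter at least three (true once $k\ge 1$ and $d\ge 1$, since the root-to-leaf-of-chain distance is $k+d\ge 2$ and there are two disjoint chains), that each hyperedge contains a private vertex, and that $n=\Theta(2^k)+N$ while $m=2^{k+1}-2+2^k\cdot d=\Theta(2^k)$, so that choosing $N\ge 2m$ gives $n=\Theta(N)$ and $2^k=\Theta(m)$. Then I would argue, exactly as in Section~\ref{sec:lowerbound}, that learning such a hypertree is equivalent to recovering the partition $\mathcal S$, i.e.\ to determining for each $v\in V$ the address $\langle a\rangle\in\{1,2\}^k$ of the terminal chain it lies in.

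Next comes the adversary argument proper. The key observation is the one stated in the excerpt: a $\mathrm{dist}_{\le d}$-query can only \emph{confirm} that $v$ lies in $S_{\langle a\rangle}$ when it reports $d_H(v,z_{\langle a\rangle,j})=d-j+1$ for some $1\le j\le d$ — equivalently when one endpoint of the query is one of the $d$ "witness" vertices $z_{\langle a\rangle,1},\dots,z_{\langle a\rangle,d}$ hanging off leaf $z_{\langle a\rangle}$ and the query distance is small enough to be returned exactly; every other query involving $v$ either involves two base-tree or chain vertices (no information about $v$), or returns "$>d$", or returns a distance consistent with several surviving addresses. The adversary therefore maintains, for each $v$, the set of addresses still consistent with the answers given so far, answering each new query involving $v$ so as to eliminate at most one address, and only declaring $v$'s address once no other choice remains. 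Since there are $2^k$ possible addresses, this forces at least $2^k-1$ queries that involve $v$ together with a base/chain vertex (or, more conservatively, at least $2^k$ queries "involving $v$" before the address is pinned down), and these counts are disjoint across distinct $v$ because a single query involves at most one vertex of $V$ on the side that matters — actually at most two vertices of $V$ total, which costs only a factor of $2$. Summing over the $N$ vertices of $V$ yields $\Omega(N\cdot 2^k)=\Omega(nm)$ queries in the worst case, giving the theorem.

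The main obstacle I anticipate is making the adversary's consistency bookkeeping airtight: one must check that answering "$d_H(v,z_{\langle a\rangle,j})=d-j+1$ is false, so the true distance is the larger value routed through the leaf $z_{\langle a\rangle}$ but \emph{not} into chain $\langle a\rangle$" is simultaneously realizable for all the eliminated addresses — i.e.\ that after any consistent sequence of such answers there really do remain two distinct partitions in ${\mathcal H}(2,k,N,d)$ matching every answer. This is where the private vertices on the base edges matter (they force a unique skeleton and hence unique distances once the partition is fixed, so the adversary cannot be caught in a contradiction), and where one invokes, just as in the proof of Claim~\ref{clm:lowerbound}, the fact that until $v$ has been in enough queries there is a child of the current "known ancestor" that no query has yet ruled out. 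Once that invariant is stated and shown to be preserved by each of the three query types (base–base, $v$–chain-vertex, $v$–$v'$), the query count follows by the same argument as Claim~\ref{clm:lowerbound}, and the diameter-at-least-three and private-vertex hypotheses in the theorem statement are exactly what the construction is built to satisfy.
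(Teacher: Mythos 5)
Your proposal follows essentially the same route as the paper: the identical construction ${\mathcal H}(2,k,N,d)$ (binary base tree with private vertices on every edge, depth-$d$ chains at the leaves, and the $N$ free vertices partitioned among the terminal hyperedges), the same key observation that membership of $v$ in $S_{\langle a\rangle}$ can only be confirmed by a query to a chain vertex $z_{\langle a\rangle,j}$ returning $d-j+1$, and the same adversary that delays confirmation until $2^k=\Theta(m)$ addresses have been exhausted, summing over the $N=\Theta(n)$ free vertices. Your additional care about the adversary's consistency bookkeeping and the disjointness of the per-vertex query counts only fleshes out details the paper leaves informal.
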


\section{Conclusions}

This paper has focused on query learning of a broad class of hypertrees that we call orderly hypertrees. 
The position of this class in the Fagin hierarchy of acyclic hypergraphs makes it potentially relevant to database theory; in particular, our results show that, compared to other members of the Fagin hierarchy, orderly hypertrees form the broadest class of acyclic hypergraphs for which non-exhausitive  $\SP$-query learning is possible.
We have provided a provably optimal \emph{online}\/ $\SP$-query algorithm for learning a orderly hypertrees of diameter at least three, which can be transformed into an optimal offline algorithm for the same class. 
%Our results also motivated the definition and study of $D$-queries, potentially opening up new lines of research on novel query types.
We have also studied a natural model of restricted distance queries that provide accurate distance information for only those vertex pairs whose separation is less than some specified threshold. 
In this model, we again exploit the orderliness property to demonstrate asymptotically tight bounds on query complexity.

Our $\SP$-query algorithm for online learning of orderly hypertrees is reminiscent of the incremental approach used by \citet{Hein} for conventional trees. 
An alternative approach, generalizing the method of \citet{BastideG25} for reconstructing conventional trees, does not lend itself to online learning. 
However, in addition to its simplicity, it may have other potential advantages, including application to a broader class of hypertrees, allowing general edge weights, or yielding improved bounds for specific subclasses of orderly hypertrees. 
It would also open up the possibility of adopting other restricted query modes as described by \citet{10}, such as %bounded queries and 
$\varepsilon$-approximate queries. 
We leave the further exploration of this alternative approach for future consideration.

\acks{
S.\ Fallat was supported in part by an NSERC Discovery Grant, application no.\ RGPIN-2019-03934.\\
D.\ Kirkpatrick was supported through the NSERC Discovery Grants program, under grant no. 22R83583.\\
K. Khodamoradi was supported through the NSERC Discovery Grants program, application no.\ RGPIN-2024-06360.\\
S.\ Zilles was supported through a Canada CIFAR AI Chair at the Alberta Machine Intelligence Institute (Amii), through an NSERC Canada Research Chair, through the New Frontiers in Research Fund  under grant  no.\ NFRFE-2023-00109 and through the NSERC Discovery Grants program under application no.\ RGPIN-2017-05336. 
}

\bibliography{references}

\appendix

% \crefalias{section}{appendix} % uncomment if you are using cleveref

\section{On Choosing Path Endpoints}\label{app:}

We remarked in the main body that the choice of node $[v_{\bot}]$ is critical in realizing the desired bound on the number of $\SP$-queries. A greedy choice can be inefficient:

\setcounter{theorem}{23}
\begin{proposition}
    The online learning algorithm desribed in Section~\ref{sec:CC}  chooses a sequence of paths all of which have the skeleton node $[v_1]$ as one endpoint and a blue node 
    $[v_{\bot}]$ as the other endpoint. 
    Suppose $[v_{\bot}]$ is always chosen to be the most distant vertex from $[v_1]$ in the subtree containing the vertex $v_{\mathrm next}$. Then, to locate an attachment node requires $\Omega(\sqrt{\ell})$  $\SP$-queries in the worst case, where $\ell$ is the number of leaves in the current sub-skeleton.
\end{proposition}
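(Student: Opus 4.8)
The plan is to reduce the claim to the construction of a single \emph{greedy-hostile} family of rooted trees. As described in Section~\ref{sec:CC}, locating an attachment node is exactly a play of the path-depth game on the current sub-skeleton, rooted at $[v_1]$, in which $\pi_{\min}$ is realized by the algorithm's choice of $[v_\bot]$ (extended to a leaf) and $\pi_{\max}$ by the adversary's freedom in placing $v_{\mathrm next}$; by Lemma~\ref{lem:sub-skeleton} each round costs only a constant number of $\SP$-queries (and at least one, since the exposed subtree containing $v_{\mathrm next}$ cannot be identified for free). So it suffices to exhibit, for each $\ell$, an orderly hypertree of diameter at least three whose sub-skeleton at some insertion step has $\ell$ leaves and on which the greedy rule ``always complete the path to a most distant leaf of the current rooted subtree'' survives $\Omega(\sqrt{\ell})$ rounds; the construction will in fact force $\Omega(\ell)$ rounds.

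I would build the tree $B_k$ recursively. Let $B_0$ be a single colored node. For $k\ge 1$, let $B_k$ have a colored root $\rho_k$ carrying exactly two branches: (i)~a plain pendant path $Q_k$ whose length (in skeleton edges) is an even number chosen to be four larger than the height of $B_{k-1}$, so that the terminal node of $Q_k$ is a blue leaf and is the \emph{unique} most distant leaf of $B_k$ from $\rho_k$ (in hypergraph distance as well as in graph distance on the skeleton); and (ii)~a new black node $b_k$ joined to $\rho_k$ and to the root $\rho_{k-1}$ of a fresh copy of $B_{k-1}$, the node $b_k$ serving only to keep the black/colored bipartition consistent and every $\rho_j$ colored. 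A short induction gives $\mathrm{height}(B_k)=4k$, exactly $k+1$ leaves, maximum degree $3$ and no degree-one black node (so that, by Theorem~\ref{thm:pathdepth}, the \emph{optimal} path depth of $B_k$ is only $O(\log k)$, dramatizing how poor greedy is); moreover $\rho_0$ is a blue leaf at depth $2k$.

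The heart of the argument is the induction that, starting the game on $B_k$ rooted at $\rho_k$ with $v_{\mathrm next}$ constrained to lie in the node $\rho_0$, the greedy player is forced to select the root-to-leaf path running down $Q_k$ (its endpoint being strictly the deepest leaf), so deleting its edges leaves, among the exposed pieces, the subtree $\{\rho_k,b_k\}\cup B_{k-1}$ rooted at $\rho_k$; since $\rho_0$ lies inside this piece, the three-point distances (as in Lemma~\ref{lem:sub-skeleton}) pin the next node $N_{\vdash}$ to $\rho_k$ and the algorithm must recurse into $\{\rho_k,b_k\}\cup B_{k-1}$, which is a copy of $B_{k-1}$ with one extra pendant at its root. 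Iterating, the game does not end until, after at least $k$ rounds, the current subtree has shrunk to the single node $\rho_0=N_A$. Hence locating the attachment node costs $\Omega(k)=\Omega(\ell)$, and in particular $\Omega(\sqrt{\ell})$, $\SP$-queries.

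Finally I would package this as an execution of the algorithm: take $H^*$ to be the orderly hypertree whose skeleton is $B_k$ (which exists, since every legal skeleton graph is the skeleton of a unique hypergraph, cf.\ Section~\ref{sec:CC}), modified only so that the blue node $\rho_0$ holds two private vertices $u,u'$; present $v_1$ inside $\rho_k$, then one vertex for each remaining colored node, then $u$, at which point the induced sub-skeleton is exactly $B_k$ with every node occupied, $\ell=k+1$ leaves, and Phase~2 in force; then present $v_{\mathrm next}=u'$, whose attachment node is $\rho_0$, triggering the expensive search. \textbf{The main obstacle} I anticipate is not the counting but making the recursion airtight: one must verify that $B_k$ is a legal skeleton (correct parity along every path and no degree-one black node, which the even pendant-path lengths and the two-edge connectors are designed to guarantee), that the sub-skeleton at the chosen moment is $B_k$ itself and not a proper minor, and---most importantly---that the greedy recursion cannot be short-circuited, i.e.\ that at every level the strict inequality between $\mathrm{len}(Q_j)$ and the height of the $B_{j-1}$-branch forces $\pi_{\min}$'s chosen path off the recursive branch while $\pi_{\max}$'s retention of the unique heavy exposed subtree lands the game back on a genuine copy of $B_{j-1}$.
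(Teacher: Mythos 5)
Your proof is correct and takes essentially the same approach as the paper's: both exhibit a bounded-degree, quadratic-size sub-skeleton (the paper's pictorial ``staircase'', your recursive caterpillar $B_k$) on which the greedy deepest-leaf rule peels off one long dead-end branch per round while the target sits at the innermost blue leaf, forcing $\Theta(k)$ rounds. Your write-up is in fact more explicit than the paper's figure-plus-``scaling up'' argument, and, like the paper's example, it actually delivers the stronger bound of $\Omega(\ell)$ rounds in terms of the number of leaves (the stated $\sqrt{\ell}$ matches measuring against the total size of the sub-skeleton, as in the main text's $\Omega(n\sqrt{m})$ remark).
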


\begin{proof} Our argument is a generalization of a similar claim made by \cite{Hein} about their algorithm for conventional trees.

       \begin{wrapfigure}{r}{0.25\textwidth}
        \centering
        \begin{tikzpicture}[every node/.style={circle, inner sep=0pt, minimum size=1.5mm, fill}, 
        rednode/.style ={circle, inner sep=0pt, minimum size=1.5mm, fill, fill=green, draw=green}, scale = 0.7]

\node(r11)[label = {}] at (-1.5,-1.5){}; % Sum = -3.0
\node(r12)[label = {}] at (-1.5,-1){};   % Sum = -2.5
\node(r13)[label = {}] at (-1.5,-.5){};  % Sum = -2.0
\node(r14)[label = {}] at (-1.5,-0){};   % Sum = -1.5
\node(r15)[label = {}] at (-1.5,0.5){};  % Sum = -1.0
\node(r16)[label = {}] at (-1.5,1){};    % Sum = -0.5
\node(r17)[label = {2}] at (-1.5,1.5){};  % Sum = 0.0

\node(r21)[label = {}] at (-1,-1.5){};   % Sum = -2.5
\node(r22)[label = {}] at (-1,-1){};     % Sum = -2.0
\node(r23)[label = {}] at (-1,-0.5){};   % Sum = -1.5
\node(r24)[label = {}] at (-1,-0){};     % Sum = -1.0
\node(r25)[label = {}] at (-1,0.5){};    % Sum = -0.5
\node(r26)[label = {4}] at (-1,1){};      % Sum = 0.0

\node(r31)[label = {}] at (-0.5,-1.5){}; % Sum = -2.0
\node(r32)[label = {}] at (-0.5,-1){};   % Sum = -1.5
\node(r33)[label = {}] at (-0.5,-0.5){}; % Sum = -1.0
\node(r34)[label = {}] at (-0.5,-0){};   % Sum = -0.5
\node(r35)[label = {6}] at (-0.5,0.5){};  % Sum = 0.0

\node(r41)[label = {}] at (0,-1.5){};    % Sum = -1.5
\node(r42)[label = {}] at (0,-1){};      % Sum = -1.0
\node(r43)[label = {}] at (0,-0.5){};    % Sum = -0.5
\node(r44)[rednode, label = {}] at (0,-0){};      % Sum = 0.0

\node(r51)[label = {}] at (0.5,-1.5){};  % Sum = -1.0
\node(r52)[label = {}] at (0.5,-1){};    % Sum = -0.5
\node(r53)[label = {5}] at (0.5,-0.5){};  % Sum = 0.0

\node(r61)[label = {}] at (1,-1.5){};    % Sum = -0.5
\node(r62)[label = {3}] at (1,-1){};      % Sum = 0.0

\node(r71)[label = {1}] at (1.5,-1.5){};  % Sum = 0.0

% Horizontal Lines
\draw[] (r11) -- (r12) -- (r13) -- (r14) -- (r15) -- (r16) -- (r17);
\draw[] (r22) -- (r23) -- (r24) -- (r25) -- (r26);
\draw[] (r33) -- (r34) -- (r35);

% Vertical Lines
\draw[] (r11) -- (r21) -- (r31) -- (r41) -- (r51) -- (r61) -- (r71);
\draw[] (r12) -- (r22) -- (r32) -- (r42) -- (r52) -- (r62);
\draw[] (r23) -- (r33) -- (r43) -- (r53);
\draw[] (r34) -- (r44);

    \end{tikzpicture}
    \caption{
    %$d \in \Theta(\sqrt{m})$%,\\ $d\ll\sqrt{n}$
    }\label{fig:inefficient}
    \end{wrapfigure}
    Suppose the current sub-skeleton is a binary tree of the 
    form shown in Figure~\ref{fig:inefficient}, where $[v_1]$ is the node in the lower left corner.
    Furthermore, suppose that $v_{\mathrm next}$  has to be inserted at the green leaf.
    %, and the latter corresponds to as many vertices from $H$ as the union of the remaining nodes, i.e., the green node corresponds to a set of $\frac{n}{2}$ vertices. 
    If $[v_{\bot}]$ is always chosen to be the most distant vertex from $[v_1]$ in the subtree containing the vertex $v_{\mathrm next}$, 
    %If reference points are chosen so as to maximize their distance,
    then the leaf labeled $i$ in the figure is a candidate to be chosen as $[v_{\bot}]$ in the $i$-th step. Scaling up this example, it follows that $\Theta(\sqrt{\ell})$ $\SP$-queries could be used in the worst case. 
    %and the overall number of $\SP$-queries issued  would be in $\Omega(n\sqrt{m})$ (assuming the worst-case sequence in which nodes might be selected). 
\end{proof}

\end{document}